\def\eqref#1{equation~\ref{#1}}
\def\1{\bm{1}}
\DeclareMathAlphabet{\mathsfit}{\encodingdefault}{\sfdefault}{m}{sl}
\SetMathAlphabet{\mathsfit}{bold}{\encodingdefault}{\sfdefault}{bx}{n}
\title{Data Selection for Fine-tuning Vision Language Models via Cross Modal Alignment Trajectories}
\author{Nilay Naharas$^{1\ast}$\quad 
Dang Nguyen$^{1\dagger\ast}$\quad 
Neslihan Bulut$^{2}$\quad 
Mohammadhossein Bateni$^{2}$\\
\AND
Vahab Mirrokni$^{2}$\quad 
Baharan Mirzasoleiman$^{1,2}$ \\[1em]
$^{1}$Department of Computer Science, University of California Los Angeles \quad 
$^{2}$Google Research\\
$^{\ast}$Equal contribution\quad
$^{\dagger}$Work done while interning at Google
}
\newtheorem{theorem}{Theorem}[section]
\newtheorem{corollary}[theorem]{Corollary}
\newtheorem{lemma}{Lemma}
\newtheorem{definition}{Definition}[section]
\newcommand\numberthis{\addtocounter{equation}{1}\tag{\theequation}}
\newcommand{\name}{\underline{Cross} \underline{M}odal \underline{A}lignment \underline{S}VD}
\newcommand{\alg}{XMAS}
\newcommand{\ourmethod}{XMAS}
\begin{document}

\maketitle

\begin{abstract}
Data-efficient learning aims to eliminate redundancy in large training datasets by training models on smaller subsets of the most informative examples. While data selection has been extensively explored for vision models and large language models (LLMs), it remains underexplored for Large Vision-Language Models (LVLMs). 
Notably, none of existing methods can outperform random selection at different subset sizes.
In this work, we propose the first principled method for data-efficient instruction tuning of LVLMs. We prove that examples with similar cross-modal attention matrices during instruction tuning have similar gradients. Thus, they influence model parameters in a similar manner and convey the same information to the model during training. Building on this insight, we propose \alg, which clusters examples based on the trajectories of the top singular values of their attention matrices obtained from fine-tuning a small proxy LVLM. By sampling a balanced subset from these clusters, \alg\ effectively removes redundancy in large-scale LVLM training data. Extensive experiments show that \alg\ can discard 50\% of the LLaVA-665k dataset and 85\% of the Vision-Flan dataset while fully preserving performance of LLaVA-1.5-7B %
on 10 downstream benchmarks and speeding up its training by 1.2$\times$. This is 30\% more data reduction  compared to the best baseline for LLaVA-665k. The project’s website can be found at \href{https://bigml-cs-ucla.github.io/XMAS-project-page/}{\small https://bigml-cs-ucla.github.io/XMAS-project-page/}.

\end{abstract}

\section{Introduction}\label{sec:intro}

Large Vision-Language Models (LVLMs)  have demonstrated impressive capabilities in understanding and reasoning over multimodal inputs \citep{liu2023visual,liu2024improved,gpt4}. LVLMs need to be trained on large data to obtain satisfactory performance. However, the amount of information in large datasets does not scale linearly with their size, due to redundancy~\citep{sorscher2022beyond}.
This raises the following key question: \textit{can we eliminate redundancy in big training data of LVLMs without harming their performance?} Answering this question enables efficient training and guides data collection.

There has been a lot of recent efforts in developing data-efficient methods for training foundation models. For Large Language Models (LLMs), heuristic metrics such as middle perplexity \citep{marion2023less}, high learnability \citep{zhou2023lobass}, large gradient norm (El2N) \cite{paul2021deep}, %
and highest uncertainty \citep{bhatt2024experimental,maharana2023d2} are commonly used. 
Other heuristics remove duplicates \citep{abbas2023semdedup} or select central examples in the embedding space \citep{bhatt2024experimental}.
For LVLMs, heuristics based on CLIP-Score \citep{gadre2023datacomp,chen2024your}, influence function \citep{liu2023visual}, or sampling from activation clusters of carefully chosen layers \citep{lee2024concept} have been proposed. Notably, none of existing methods outperform random selection at various subset sizes, as we confirm in our experiments.

In this work, we address this problem from an optimization perspective, by studying the effect of every example on minimizing the training loss. As machine learning models are trained with gradient methods, examples that have similar gradients during the training affect the model parameters in a similar manner. Hence, redundancy for training should be defined as gradient similarity \citep{mirzasoleiman2020coresets}. However, identifying examples with similar gradients during training becomes very challenging for LVLMs. First, LVLMs have billions of parameters and gradient similarity in such a high-dimensional space becomes vacuous and prohibitively expensive to calculate. Besides, as gradients change during the training, one should take into account the similarity between high-dimensional gradients during the entire training process. Finally, %
as image and text embeddings lie in different spaces \citep{yi2024bridge,role2025fill}, the part of the gradient corresponding to multimodal alignment dominates the parts corresponding to individual modalities. This makes similarity calculation based on full gradients ineffective.

In this work, we address the above challenges and propose a theoretically-rigorous and efficient method to eliminate redundancy in big training data of LVLMs. First, we analyze a single-layer transformer and prove that the pairwise gradient distance between examples at a checkpoint can be upper-bounded by the distance between their cross-modal attention matrices. %
Then, we show that for instruction tuning where the Hessian is small, examples that have bounded distance between their cross-modal attention matrices at two checkpoints have bounded gradient distance between the checkpoints. %
Finally, we propose \name\ (\alg) that 
fine-tunes a small proxy VLM and tracks the trajectory of %
largest singular values of cross-modal attention matrices of examples at a few checkpoints during the training. 
\alg\ finds examples with similar gradients by clustering attention trajectories. By sampling a balanced subset of examples %
from the clusters, \alg\ effectively eliminates redundancy without harming the performance. %
We also theoretically analyze the convergence of training on \alg\ subsets. %

Our experiments demonstrate that \alg\ outperforms existing baselines at various data budgets and
can discard 50\% of the LLaVA-665k dataset and 85\% of the Vision-Flan dataset while fully preserving performance of LLaVA-1.5-7B on 10 downstream benchmarks, and speeding up its instruction tuning (including the time for data selection) by 1.2$\times$.
This is 30\% more data reduction compared to the best baseline for LLaVA-665k.
We also conduct an extensive ablation study on different components of our method.
\section{Related Works}\label{sec:related}
\vspace{-1mm}
High-quality data is crucial for ensuring satisfactory performance of LLMs and LVLMs.

\textbf{Data-efficient Training of LLMs.}
For instruction tuning, manually crafted high-quality instruction/response pairs was shown highly effective 
\citep{zhou2023lima}. Motivated by this, several studies explored using LLMs such
as ChatGPT, or training on textbooks  \citep{eldan2023tinystories, li2023textbooks,chen2023alpagasus}. Metrics such as diversity \citep{bukharin2023data, du2023mods,tirumala2023d4}, difficulty \citep{bhatt2024experimental, marion2023less, zhou2023lima}, middle perplexity rankings \citep{marion2023less}, high gradient norm
(EL2N)~\citep{paul2021deep}, memorization
ranking~\citep{biderman2023emergent}, and high learnability (difference between initial and final loss values) \citep{zhou2023lobass} 
have been explored. However, these methods assign similar scores to similar examples and thus cannot eliminate redundancy.
To address this, SemDeDup~\citep{abbas2023semdedup} removes redundancy by clustering embeddings. D2 Pruning \citep{maharana2023d2} prunes data using graph-based message passing to balance diversity and difficulty. Despite being effective for LLMs, these methods perform poorly for LVLMs and are often outperformed by random selection, as we will confirm in our experiments. 

\textbf{Data-efficient Training of LVLMs.} 
There has been recent efforts for selecting high-quality multimodal data.
CLIP-Score~\citep{gadre2023datacomp} selects examples with highest image–text similarity based on a pretrained CLIP model. %
However, CLIP-Score overlooks question–answer relevance and thus performs poorly for LVLMs. Self-Sup~\citep{sorscher2022beyond} clusters %
embeddings and selects examples closest to cluster centroids. %
SELF-FILTER trains a scoring model along with the VLM to learn difficulty of training instructions based on feature extracted from CLIP and GPT4V. Then, it uses the scoring model to select challenging instructions and filters them for diversity \citep{chen2024your}. 
TIVE~\citep{liu2024less} selects examples that have the largest gradient similarity (influence) to other examples in the same task and selects more from tasks with smallest average influence. 
SELF-FILTER and TIVE are very expensive and yield suboptimal performance.
Most recently, COINCIDE~\citep{lee2024concept} proposed to train a proxy LVLM and cluster examples based on activations of carefully selected layers, and sampling more from clusters that are closer to each other and less from denser clusters \citep{lee2024concept}. 
However, none of existing methods outperform random selection at various subset sizes, as we will confirm in our experiments. 

\textbf{Targeted Data Selection. }
Targeted data selection methods such as LESS \citep{xia2024less} and ICONS~\citep{wu2024icons} select influential training samples with largest gradient similarity to a validation set. Targeted data selection approaches have two major limitations: (i) they require computing gradients for every training example, which is even more computationally expensive than directly fine-tuning on the full dataset; and (ii) they rely on a validation set, which is often unavailable or impractical when training models intended for a broad range of downstream tasks. In our work, we do not assume access to a validation data.
\section{Problem Formulation}
\textbf{Large Vision Language Model (LVLM).} 
An LVLM %
consists of a vision encoder, %
an LLM, %
and a projector. 
We denote all the model parameters by $\phi_{all}$. %
An input $(v^i, t^i, y^i)$ to LVLM consists of an image $v^i$, an instruction $t^i$ and the corresponding answer $y^i$.
The encoded image is projected to the language space using the projector, before being concatenated with the instruction and fed into the %
language model. 
The response $y^i$ is then sampled from the following conditional probability distribution: \looseness=-1
\begin{equation}\label{eq: likelihood}
    p_{\phi}(y^i|v^i, t^i) = \prod_{j\in V} p_{\phi_{all}}(y^i_j|v^i, t^i, y^i_{<j}).
\end{equation}
{where $y^i_j$ denotes the token at index $j$ and $y^i_{<j}$ denotes all tokens before index $j$.} 

\textbf{Visual Instruction Tuning (VIT).} 
To adapt a pre-trained LVLM for following specialized task instructions, visual instruction tuning (VIT), which is a type of supervised fine-tuning (SFT), is employed on a dataset  $\mathcal{D}_{\text{VIT}} = \{(v,t, y)^i\}_{i\in V}$. %
During instruction-tuning, the vision encoder is kept frozen and only the projector and the LLM are trained. We use $\phi$ to denote the trainable parameters. Therefore, the visual instruction tuning objective is to minimize the following negative log likelihood loss: %
\begin{equation}
    \underset{\phi}{\text{min}} \mathcal{L}(\phi, \mathcal{D}_{\text{VIT}}) = -\frac{1}{|V|}\sum_{(v,t,y)^i \in \mathcal{D}_{\text{VIT}}}\left[\log p_{\phi}(y^i | v^i, t^i)\right]
\end{equation}
In practice, gradient methods are applied to train the model by minimizing the above loss function.\looseness=-1

\textbf{Finding Redundant Clusters in the VIT Data.} Consider an LVLM with parameters $\phi$. 
Our goal is to find a clustering of the data $V=\{C_1 \cup \cdots \cup C_K\}$ such that in every cluster $C_k$ examples have similar gradients during the entire instruction tuning process. Formally, let $\Phi$ be the set of trainable parameters of the model during fine-tuning. Then we wish to find a solution to the following problem:
\begin{equation}\label{eq:problem}
    V=\{C_1 \cup \cdots \cup C_K\} \quad \text{s.t.} \quad \max_{\phi\in \Phi}\|\nabla \mathcal{L}_i(\phi) - \nabla \mathcal{L}_j(\phi)\|\leq R \quad \forall i,j \in C_k, \forall k\in[K],
\end{equation}
where $\nabla \mathcal{L}_i(\phi)$ is the gradient of example $i$ at parameter $\phi$.
Examples in every cluster have similar gradients during instruction tuning and hence are redundant w.r.t each other for training. 

\textbf{Sampling a Balanced Subset From Clusters.} Having the above clustering, for a given data budget $B$, we sample a balanced (i.e., same number of examples from each cluster) subset of examples $S \subseteq V$ from all the clusters $\{C_1 \cup \cdots \cup C_K\}$ to train the target model. In doing so, we effectively eliminate redundancy in the multimodal data, and ensure satisfactory performance on various (unseen) downstream tasks.

\section{Method: Extracting Non-redundant Subsets of the VIT Data}\label{sec:method}

Solving Eq. \ref{eq:problem} is very challenging as it requires training the model, saving the gradients of all the training examples after every parameter update, and clustering the concatenated gradient vectors. However, for LVLMs with billions of parameters, this becomes infeasible and does not improve the training efficiency. %

To address this, we wish to train a small proxy VLM and use its training dynamics to approximate the pairwise gradient distances during instruction tuning. If this can be done, one can cluster the training examples based on the estimated gradient distances obtained from the proxy, and randomly sample a balanced subset from the clusters to train the larger target LVLM on the non-redundant subset. If the proxy model is small-enough, training the proxy to find the non-redundant subset and instruction tuning the larger target LVLM will be faster than instruction tuning on the full data. %

\subsection{Finding Clusters of Examples with Similar Gradient}
In this section, we wish to answer the following question: when fine-tuning a proxy VLM on the instruction-tuning data, which statistics can be used to upper-bound pairwise gradient distances during the training? 

Answering the above question requires understanding the mechanism by which VLMs learn from the instruction-tuning data.
Intuitively, instruction tuning aligns the vision and language modalities and enables the LLM to understand the content of the images. This is primarily done via the trainable attention matrices in the LLM structure. %
Formally, consider the $l$-th layer of the language decoder of a VLM with parameter $\phi$ with hidden dimension size $D$. %
The per-layer attention matrix for example $i \in V$ is defined as:
\begin{align}\label{eq:per_layer_attn}
    A^l_i(\phi) = \text{softmax} \left( \frac{Q_l \otimes K_l^T}{\sqrt{D}} \right) \in \mathbb{R}^{N \times N}, 
\end{align}
where $N = n_I + n_T$ which is the total number of image and text tokens, and $Q_l, K_l \in \mathbb{R}^{N \times D}$ are the concatenated query and key matrices along the hidden dimension across all the attention heads.
$A^l_i(\phi)$ consists of both the cross-modal attention and intra-modal attention terms. 
The cross-modal attention part $\chi_i^l(\phi)\in \mathbb{R}^{n_T \times n_I}$ corresponds to the bottom left block of  of the per-layer attention matrix $A_i^{l}(\phi)$. \looseness=-1
\begin{definition}[Cross-modal alignment score $\sigma$]\label{def:alignment_score}
    For data point $i$, we define $\sigma_i(\phi)$ as the sum of the top five singular values of its %
    cross-modal attention matrix $\chi_i(\phi) = \sum_{l=1}^{L} \chi_i^{l}(\phi)$. Intuitively, $\sigma$ captures the amount of cross-modal alignment for individual examples. 
\end{definition}
\textbf{Remark.} Cross-modal attention matrices are transferrable between proxy and target LVLMs ~\citep{zhao2024stitch}. This implies that cross-modal alignment scores obtained based on a proxy VLM closely estimates alignment for the larger LVLM. We will empirically confirm this observation in our ablation studies. %

\subsubsection{Bounding Gradient Distance via Attention Distance}
Next, we analyze a single-layer transformer with one attention head %
and RMS layer normalization, trained using the Frobenius norm squared loss.  
This setting has been studied in several recent theoretical analysis of transformers \citep{ormaniec2024does,song2024unraveling}. RMS layer normalization is standard practice in many recent open-source models, such as LLaMA and Mistral.
In practice, the gain parameter $g$ of RMS normalization is often initialized to a small value to help stabilize early training, thereby preventing activation blowups and facilitating stable learning in residual architectures~\cite{zhang2019root}. 

The following theorem shows that at every step $t$ during the training, pairwise attention distances can be used to upper-bound pairwise gradient distances.   

\begin{theorem}%
\label{thm:gradient_bound} 
Consider a single-layer transformer with a single attention head with RMS layer normalization, that is trained using the Frobenius norm squared loss. %
Let $D$ be the hidden dimensionality of the model, $N$ be the number of input tokens, and $c\geq\|\phi^t\|$ be the upper-bound on the norm of model parameters.
Then, if the gain $g$ of RMS normalization layer is small enough $g < N^{-\nicefrac{5}{8}} D^{-\nicefrac{1}{8}} c^{-\nicefrac{3}{4}}$ and for all examples $p\in V$ the distance between cross-modal attention matrices of the proxy and target models are bounded, i.e., 
$\| \chi_p(\phi^t_{\text{proxy}}) - \chi_p(\phi^t_{\text{target}})\|_F \leq T$,
then pairwise attention distances of the proxy model $K^t_{ij} = \| \chi_i(\phi^t_{\text{proxy}}) - \chi_j(\phi^t_{\text{proxy}})\|_F$ for examples $i,j\in V$ dominate the bound on their pairwise gradient distance of the target model at every step $t$ in training: 
\begin{align*}
    \|\nabla \mathcal{L}_i(\phi^t_{\text{target}}) - \nabla \mathcal{L}_j(\phi^t_{\text{target}})\|_F &\leq \frac{4}{\sqrt{3}} \cdot (K^t_{ij} +2T) + \frac{8 \sqrt{N}}{3 \sqrt{3} c} = \Delta^t_{ij} \numberthis
\end{align*}
\end{theorem}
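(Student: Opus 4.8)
The plan is to write the per-example gradient as an explicit algebraic function of the full attention matrix $A_i(\phi^t)$ and the RMS-normalized token embeddings, and then to bound the gradient difference $\|\nabla\mathcal{L}_i(\phi^t)-\nabla\mathcal{L}_j(\phi^t)\|_F$ by two contributions: a term linear in the attention-matrix difference $A_i-A_j$, and a residual collecting everything else (the dependence on the individual normalized inputs and their higher-order coupling). For the single-layer architecture the output is, schematically, $\mathrm{out}_i = A_i(\phi^t)\,\hat X_i\,W_VW_O$ with $\hat X_i$ the RMS-normalized input (up to the residual connection), so differentiating $\mathcal{L}_i=\|\mathrm{out}_i-y_i\|_F^2$ through the chain output $\to$ attention $\to$ query/key/value $\to$ normalized input $\to$ parameters yields a closed form in $A_i$, $\hat X_i$ and $\phi^t$. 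The crucial first observation is that every appearance of $\hat X_i$ carries a factor of the gain $g$, since an RMS-normalized row has norm exactly $g\sqrt D$; hence each residual term is a positive power of $g$ times a polynomial in $N$, $D$, and the parameter-norm bound $c$.

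First I would establish the Lipschitz-in-attention step. Bounding the softmax Jacobian (its operator norm is at most $\tfrac12$, attained at the uniform distribution) together with the RMS-normalization Jacobian and $\|\phi^t\|\le c$ gives a constant $L$ such that the attention-linear part of the gradient difference is at most $L\,\|A_i(\phi^t)-A_j(\phi^t)\|_F$. Next I would pass from the full attention matrix to its cross-modal block: splitting $A_i-A_j$ into its intra-modal blocks and the cross-modal block $\chi_i-\chi_j$, I would absorb the intra-modal contributions into the $g$-dependent residual, leaving $\|\chi_i(\phi^t)-\chi_j(\phi^t)\|_F$ as the leading term. Tracking the constants through these two Jacobian bounds is what produces the prefactor $\tfrac{4}{\sqrt3}$.

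To control the residual I would invoke the hypothesis $g < N^{-\nicefrac58}D^{-\nicefrac18}c^{-\nicefrac34}$. Collecting the $g$-powers emerging from the normalized inputs and from the softmax and RMS Jacobians, the residual is bounded by $C\,g^{\alpha}N^{\beta}D^{\gamma}c^{\delta}$ for the exponents produced by the chain rule; since the residual is increasing in $g$, the stated threshold is chosen precisely so that for all $g$ below it the residual is at most $\tfrac{8\sqrt N}{3\sqrt3\,c}$. Finally I would transfer the bound from target to proxy by the triangle inequality: using $\|\chi_p(\phi^t_{\text{proxy}})-\chi_p(\phi^t_{\text{target}})\|_F\le T$ for $p\in\{i,j\}$ gives $\|\chi_i(\phi^t_{\text{target}})-\chi_j(\phi^t_{\text{target}})\|_F\le K^t_{ij}+2T$, which combined with the previous two steps yields exactly $\Delta^t_{ij}$.

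The main obstacle is the constant bookkeeping in the residual bound rather than any single conceptual step: the softmax Jacobian, the RMS-normalization Jacobian, and the product rule across $W_Q,W_K,W_V,W_O$ each contribute their own factors of $N$, $D$, $g$, and $c$, and one must verify that under the specific threshold these combine to \emph{exactly} $\tfrac{8\sqrt N}{3\sqrt3\,c}$ rather than merely $O(\sqrt N/c)$. The fractional exponents $\nicefrac58$ and $\nicefrac18$ strongly suggest the threshold is reverse-engineered by balancing several residual terms of different $g$-degree, so the delicate work is identifying the dominant term and confirming that the $\sqrt3$-type constants (plausibly arising from bounding $\sqrt{1+2(\cdot)}$ factors in the RMS-norm derivative) come out as stated.
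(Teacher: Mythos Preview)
Your approach matches the paper's: decompose $\|\nabla\mathcal{L}_i-\nabla\mathcal{L}_j\|_F$ over the three parameter blocks $\nabla_{W_Q},\nabla_{W_K},\nabla_{W_V}$, write each via the chain rule as an explicit function of the post-softmax attention $S_i$ and the normalized input $X_i$, telescope each difference into a term linear in $\|S_i-S_j\|_F$ plus residuals bounded by $\|X_i-X_j\|\le 2X$, $\|y_i-y_j\|\le 2B$, and the softmax-Jacobian bound $\|\partial S/\partial A\|_F\le\sqrt N/2$, then split off the cross-modal block via the crude bound $\|S_i^r-S_j^r\|_F\le 2S$, substitute $X\le g\sqrt{ND}$, and finish with the proxy--target triangle inequality exactly as you wrote.

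Two corrections on the constant bookkeeping will save you time. First, there is no $W_O$ in the paper's model and no need to differentiate through the RMS layer: the normalized $X_i$ is treated as the data-dependent input, so RMS contributes only the norm bound $\|X\|\le g\sqrt{ND}$, never a Jacobian. Second, your guess about the origin of the $\sqrt3$ is off: it comes not from any RMS derivative but from the parametrization assumption $\|W_Q\|,\|W_K\|,\|W_V\|\le c/\sqrt3$ so that $\|\phi^t\|\le c$. With this, the dominant $\epsilon'$-coefficient (from the $W_Q,W_K$ pieces) is $\tfrac{2\sqrt N\,V^2X^4(K+Q)}{D^{3/2}}=O\bigl(\tfrac{4}{\sqrt3}N^{5/2}D^{1/2}c^3g^4\bigr)$, and the threshold $g^4<N^{-5/2}D^{-1/2}c^{-3}$ is reverse-engineered---exactly as you suspected---so that this collapses to $\tfrac{4}{\sqrt3}$ and the residual to $\tfrac{8\sqrt N}{3\sqrt3\,c}$.
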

All the proofs %
can be found in Appendix \ref{app:proof}.  

\textbf{Remark.} %
Under the assumptions of Theorem \ref{thm:gradient_bound}, we have $K^t_{ij} \leq 2 \sqrt{N}$. Thus, for a good proxy model with small $T$, i.e. $T \ll \sqrt{N}$, $K^t_{ij}$ dominates the upper-bound on pairwise gradient distances of the target model. Hence, clustering based on $K^t_{ij}$ is similar (up to some error) to clustering based on gradients of the target model at step $t$.
\looseness=-1

\subsubsection{Bounding Gradient Distance Throughout Finetuning}
Theorem \ref{thm:gradient_bound} shows that examples with similar cross-modal alignment score at a particular step $t$ during training have similar gradients at step $t$. However, our goal is to find groups of examples with similar gradient \textit{throughout} the training.
Next, we show that for fine-tuning where loss has a small bounded curvature \citep{gekhman2024does,yang2024smalltolarge}, 
examples that have similar cross-modal attention matrices two checkpoints also have similar gradients \textit{between} those checkpoints. %

\begin{theorem}%
\label{thm:trajectory}
Under the assumptions of Theorem~\ref{thm:gradient_bound}, 
Suppose the per-example loss during fine-tuning admits a second-order Taylor approximation with bounded curvature, i.e., $\| \nabla^2\mathcal{L}_i(\phi^t_{\text{target}})\| \le \beta\; \forall t$,
Then, for any two checkpoints $\phi^{t_1}, \phi^{t_2}$ where $\|\phi^{t_1} - \phi^{t_2}\| \leq \delta$, the largest pairwise attention distance between them provides an upper bound on the gradient distance at any intermediate checkpoint. Specifically, for all $t_z \in [t_1, t_2]$, we have:
    \begin{equation}\label{eq:bound_interval}
    \|\nabla \mathcal{L}_i(\phi^{t_z}_{\text{target}}) - \nabla \mathcal{L}_j(\phi^{t_z}_{\text{target}})\|_F
    \leq
    \max \{ \Delta_{ij}^{t_1}, \Delta_{ij}^{t_2}\} + 2 \delta \beta = \Delta_2
\end{equation}
\end{theorem}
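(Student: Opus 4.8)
The plan is to reduce Theorem~\ref{thm:trajectory} to Theorem~\ref{thm:gradient_bound} by exploiting the fact that a bounded Hessian makes each per-example gradient Lipschitz in parameter space. Consequently, the pairwise gradient distance cannot change by much as we move from an endpoint checkpoint, where Theorem~\ref{thm:gradient_bound} already controls it, to a nearby intermediate checkpoint. Concretely, I would work with the difference map $g_{ij}(\phi) := \nabla \mathcal{L}_i(\phi) - \nabla \mathcal{L}_j(\phi)$, show that it is $2\beta$-Lipschitz, and then interpolate between the two checkpoints.

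First I would establish the Lipschitz estimate. Since the assumption gives $\|\nabla^2 \mathcal{L}_i(\phi)\| \le \beta$ and $\|\nabla^2 \mathcal{L}_j(\phi)\| \le \beta$ uniformly over the relevant region, the Hessian of the difference satisfies $\|\nabla^2(\mathcal{L}_i - \mathcal{L}_j)(\phi)\| \le 2\beta$ by the triangle inequality in operator norm. Expressing $g_{ij}$ along the segment joining $\phi^{t_1}$ and $\phi^{t_z}$ via the fundamental theorem of calculus, $g_{ij}(\phi^{t_z}) = g_{ij}(\phi^{t_1}) + \int_0^1 \nabla^2(\mathcal{L}_i - \mathcal{L}_j)\big(\phi^{t_1} + s(\phi^{t_z}-\phi^{t_1})\big)\,(\phi^{t_z}-\phi^{t_1})\,ds$, taking Frobenius norms and bounding the integrand gives $\|g_{ij}(\phi^{t_z})\|_F \le \|g_{ij}(\phi^{t_1})\|_F + 2\beta\,\|\phi^{t_z}-\phi^{t_1}\|$.

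Next I would substitute the two ingredients. By Theorem~\ref{thm:gradient_bound} applied at checkpoint $t_1$, the first term is at most $\Delta_{ij}^{t_1}$. For the second term, the displacement of an intermediate checkpoint from the endpoint is controlled by the checkpoint separation: since $\phi^{t_z}$ lies on the fine-tuning trajectory between $\phi^{t_1}$ and $\phi^{t_2}$ and $\|\phi^{t_1}-\phi^{t_2}\| \le \delta$, we have $\|\phi^{t_z}-\phi^{t_1}\| \le \delta$, so this term is at most $2\beta\delta$. This yields $\|g_{ij}(\phi^{t_z})\|_F \le \Delta_{ij}^{t_1} + 2\beta\delta$. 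Running the identical argument anchored instead at $t_2$ (which uses $\|\phi^{t_z}-\phi^{t_2}\| \le \delta$) gives $\|g_{ij}(\phi^{t_z})\|_F \le \Delta_{ij}^{t_2} + 2\beta\delta$. Since both bounds hold simultaneously, replacing the tighter $\min$ by the $\max$ produces the single uniform bound $\Delta_2 = \max\{\Delta_{ij}^{t_1},\Delta_{ij}^{t_2}\} + 2\beta\delta$ valid for every intermediate $t_z \in [t_1,t_2]$.

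The main obstacle is the displacement control $\|\phi^{t_z}-\phi^{t_1}\| \le \delta$. The hypothesis only bounds the distance between the two endpoint checkpoints, so for a point on the discrete optimizer trajectory one must argue it does not wander outside the $\delta$-ball around an endpoint. This is immediate if $\phi^{t_z}$ is taken on the segment $[\phi^{t_1},\phi^{t_2}]$, and is well justified for fine-tuning because parameter updates over a short checkpoint window are small and essentially monotone; for full rigor on the raw iterates one should either assume the trajectory segment has diameter at most $\delta$ or replace $\delta$ by the trajectory diameter, which only affects constants. Everything else is a routine application of the Taylor remainder bound combined with Theorem~\ref{thm:gradient_bound}.
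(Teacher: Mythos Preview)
Your proposal is correct and follows essentially the same route as the paper: both arguments use the bounded-Hessian assumption to get a $\beta$-Lipschitz gradient for each per-example loss, then combine this with the Theorem~\ref{thm:gradient_bound} bound at an endpoint via the triangle inequality to control the pairwise gradient distance at $t_z$. The paper's write-up applies Lipschitzness separately to $\nabla\mathcal{L}_i$ and $\nabla\mathcal{L}_j$ rather than to their difference $g_{ij}$, but this is only a cosmetic distinction; you also correctly flag the implicit assumption $\|\phi^{t_z}-\phi^{t_1}\|\le\delta$, which the paper invokes without further comment.
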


\textbf{Remark.} The above theorem implies that if two examples have similar attention matrices at a few checkpoints during instruction tuning of the proxy model, then examples have similar target gradients \textit{throughout} instruction tuning. %
Since curvature $\beta$ is small during fine-tuning, $\max \{ \Delta_{ij}^{t_1}, \Delta_{ij}^{t_2}\} \leq \frac{8 \sqrt{N}}{\sqrt{3}}$ will dominates the upper-bound of gradient distance in Eq~\ref{eq:bound_interval}. 
Based on this, we define alignment trajectory: 
\begin{definition}[Alignment trajectory.] \label{def:traject}
Consider $r$ checkpoints $\{\phi^{t_1}, \phi^{t_2}, \ldots, \phi^{t_r}\}$ during fine-tuning a proxy model on $\mathcal{D}_{\text{VIT}}$. 
The cross-modal alignment trajectory of example $i\in V$ is defined as:\looseness=-1
\begin{equation}
    T_i = \{\sigma_i(\phi^{t_1}), \sigma_i(\phi^{t_2}), \dots, \sigma_i(\phi^{t_r}) \} \label{eq:attn_traj}    
\end{equation}
where $\sigma_i(\phi^{t_j})$ represents the cross-modal alignment score of example $i$ at checkpoint $\phi^{t_j}$.
\end{definition}

\textbf{Clustering alignment trajectories.}
Having cross-modal alignment trajectories for all examples in the dataset using the proxy model, we cluster these trajectories using the $K$-means clustering. In doing so, we get clusters of examples with similar gradients during the training $\{C_1, C_2, \dots, C_K\}$.

\subsection{Sampling a Balanced Subset from Gradient Clusters}\label{subsec:stability_sampling}
Next, we sample a balanced subset from the alignment trajectory clusters to eliminate redundancy. While random sampling from the clusters is already effective, sampling examples with more stable (less oscillating) trajectories yields a better performance in practice.

\begin{definition}[Instability score] 
The instability score of example $i\in V$ is the total oscillation in the cross-modal alignment score of $i$ during fine-tuning:
\begin{equation}
    S_i = \sum_{j=1}^T | \sigma_i(\phi^{t_j}) - \sigma_{i}(\phi^{t_{j-1}})|. \label{eq:instability_score}
\end{equation} 
\end{definition}
Intuitively, examples with smallest instability score within every cluster are centers of subgroups in that cluster. 
Sampling examples with smallest instability score ensures selecting a diverse set of representative examples from the clusters. We will confirm the effectiveness of stability sampling in our ablation studies. %

\begin{algorithm}[!t]
\caption{Data Selection Based on Cross-Modal Alignment Trajectories (\ourmethod) }\label{alg:\ourmethod~}
\begin{algorithmic}[1]
\Require{Training dataset $\mathcal{D}_{\text{VIT}}$, a fixed data budget $B$, number of clusters $K$}
\Ensure{Subset $S \subseteq \mathcal{D}_{\text{VIT}}$, $|S| \leq B$}
\State $S \leftarrow \emptyset$
\State Train a small proxy VLM and cluster examples in $\mathcal{D}_{\text{VIT}}$ based on their alignment trajectories
\State Compute Instability score $S_i$ for each example in $\mathcal{D}_{\text{VIT}}$ according to Eq ~\ref{eq:instability_score}
\State Sort clusters by size to get $C = \{C_1, C_2, \ldots, C_K\}$
\For{$k = 1$ to $K$}
    \If{$|C_k| \leq R_k= \frac{B - |S|}{K - k + 1}$}
        \State $S \gets S \cup C_k$
    \Else
        \State $S \gets S \cup C'_k$ where $C'_k \subset C_k$ is the subset of $R_k$ most stable examples %
    \EndIf
\EndFor
\State \Return $S$
\end{algorithmic}
\end{algorithm}

\subsection{Data Selection Based on Cross Modal Alignment Trajectories (\alg)}
To summarize, our method, \ourmethod, includes three main steps. First, we fine-tune a proxy model to get the cross-modal alignment trajectories (see Def. \ref{def:traject}) for all the examples. %
Then, we cluster these trajectories to find examples with bounded gradient difference throughout instruction tuning. Finally, we sample a balanced subset of stable points from these clusters. %
The pseudocode of \ourmethod~ is given in Algorithm~\ref{alg:\ourmethod~} and a visualization can be found in Figure~\ref{fig:method_fig} in the Appendix.

\textbf{Convergence of \alg.}
Finally, we analyze the convergence of finetuning target model on the subset.
The following corollary shows that training on the subset selected by \ourmethod~converges to a neighborhood of the optimal solution found by training the target LVLM on the full dataset. For brevity, we show the informal version of the corollary here and defer the formal statement of the corollary and its proof to Appendix \ref{proof:convergence}.
\begin{corollary}[Informal: Convergence of \ourmethod]\label{cor:convergence} 
    Under the assumptions of Theorem~\ref{thm:trajectory}, applying incremental gradient methods with stepsize $\eta$ on subsets found by \alg, converges to a neighborhood of the solution $\phi^*$ found by training on full data. 
\end{corollary}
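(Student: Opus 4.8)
The plan is to adapt the standard coreset convergence argument \citep{mirzasoleiman2020coresets}, using the per-cluster gradient-distance guarantee of Theorem~\ref{thm:trajectory} in place of the usual coreset approximation bound. Two ingredients are needed: (i) a bound showing that the gradient of the selected subset $S$ tracks the full-data gradient uniformly along the whole optimization trajectory, and (ii) a convergence lemma for incremental gradient methods that turns such a persistent gradient-approximation error into convergence to a neighborhood of $\phi^*$.

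For (i), I would write the full gradient as $\nabla \mathcal{L}(\phi) = \frac{1}{|V|}\sum_{k=1}^{K}\sum_{i\in C_k}\nabla \mathcal{L}_i(\phi)$ and the (size-reweighted) subset gradient as $\nabla \mathcal{L}_S(\phi) = \frac{1}{|V|}\sum_{k=1}^{K}|C_k|\,\nabla \mathcal{L}_{m_k}(\phi)$, where $m_k\in C_k$ is the representative sampled from cluster $C_k$. Subtracting and applying the triangle inequality cluster by cluster yields
\[
\bigl\|\nabla \mathcal{L}(\phi) - \nabla \mathcal{L}_S(\phi)\bigr\|
\le \frac{1}{|V|}\sum_{k=1}^{K}\sum_{i\in C_k}\bigl\|\nabla \mathcal{L}_i(\phi) - \nabla \mathcal{L}_{m_k}(\phi)\bigr\|
\le \Delta_2 ,
\]
where the last step is exactly the within-cluster bound of Theorem~\ref{thm:trajectory}. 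The decisive point is that this theorem bounds the pairwise gradient distance at \emph{every} intermediate checkpoint $\phi^{t_z}\in[\phi^{t_1},\phi^{t_2}]$, so the approximation error $\epsilon := \Delta_2$ holds uniformly over all iterates encountered during training rather than at an isolated point.

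For (ii), I would model an incremental step as $\phi^{t+1} = \phi^t - \eta\,\nabla \mathcal{L}_S(\phi^t)$ and decompose $\nabla \mathcal{L}_S = \nabla \mathcal{L} + e^t$ with $\|e^t\|\le\epsilon$. Feeding this into the descent lemma, with $\beta$-smoothness supplied by the bounded-curvature hypothesis $\|\nabla^2 \mathcal{L}_i\|\le\beta$ of Theorem~\ref{thm:trajectory}, and assuming the loss obeys a (local) Polyak--\L{}ojasiewicz or strong-convexity condition with constant $\mu$ near $\phi^*$, I would telescope the one-step inequality. The non-vanishing bias $e^t$ blocks exact convergence, giving instead $\|\phi^t - \phi^*\| \le (1-\eta\mu)^t\|\phi^0 - \phi^*\| + O(\epsilon/\mu + \eta)$: geometric contraction into a ball whose radius is governed by $\epsilon=\Delta_2$ and $\eta$. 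Because $\Delta_2$ is small when clusters are tight, this limiting neighborhood is small, which is the desired statement.

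I expect the main obstacle to be the convexity structure required in (ii): a deep LVLM loss is non-convex, so the clean distance-to-$\phi^*$ bound only holds locally under a PL or strong-convexity assumption, and the honest general statement must otherwise be phrased in terms of gradient norm / stationarity. A secondary technical point is reconciling the \emph{balanced} (equal-count) sampling of \alg\ with the size-proportional weighting used above to keep the subset gradient unbiased: without reweighting, unequal cluster sizes inject a residual bias that must be absorbed into $\epsilon$, slightly enlarging the limiting neighborhood. Verifying that $\Delta_2$ and $\beta$ stay uniformly bounded along the actual optimization path — so that $\epsilon$ is genuinely constant across the telescoping sum — is where most of the care is required.
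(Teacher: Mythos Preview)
Your plan matches the paper's proof: both bound the subset-vs-full gradient error cluster by cluster using the within-cluster guarantee of Theorem~\ref{thm:trajectory}, then feed that uniform error into the incremental-gradient convergence result of \citet{mirzasoleiman2020coresets} to obtain contraction into a neighborhood of $\phi^*$. The paper simply invokes that theorem directly rather than re-deriving the descent/PL step you sketch in (ii).

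The one place the paper is more concrete is exactly the ``secondary technical point'' you flag. Rather than size-proportional weights $|C_k|$, the paper weights each selected example by $r_{\min}/k$ (where $r_{\min}=\min_j|C_j|$ and $k$ is the per-cluster sample count) and then splits the total gradient error as
\[
\xi' \le K\bigl[r_{\min}\,\Delta + (r_{\max}-r_{\min})\,g_{\max}\bigr],
\]
i.e.\ a within-cluster term controlled by the Theorem~\ref{thm:trajectory} bound plus an explicit cluster-imbalance term $(r_{\max}-r_{\min})g_{\max}$. This is precisely the ``residual bias absorbed into $\epsilon$'' you anticipated; the paper just writes down the constants. The final bound it states is $\|\phi^{t+1}-\phi^*\|^2 \le (1-\eta c')^{t+1}\|\phi^t-\phi^*\|^2 + 2\xi' R'/c'^2 + \eta B^2(r_{\min}/k)^2 g_{\max}^2$, structurally identical to your $O(\epsilon/\mu+\eta)$ neighborhood.
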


\begin{figure*}[t!]
    \vskip -0.1in
    \centering
    \includegraphics[width=1\columnwidth]{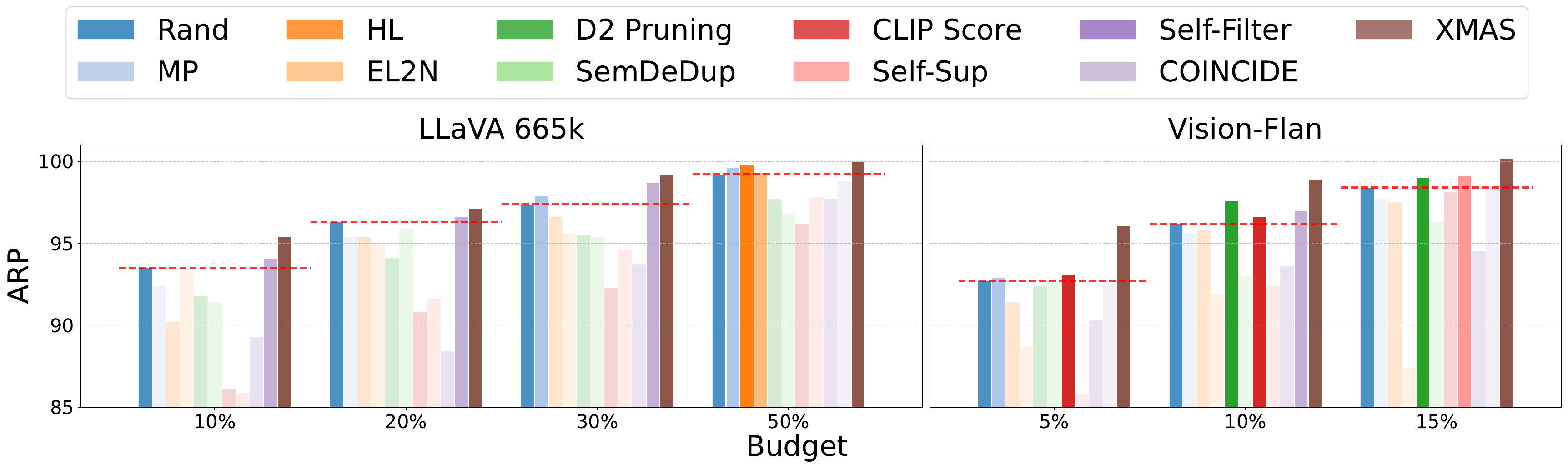}
    \vspace{-4mm}
    \caption{
    The average relative performance (ARP) of different subsets of (left) LLaVA 665k and (right) Vision-Flan when fine-tuning LLaVA-1.5-7B. Methods that outperform random selection are shown in opaque. \ourmethod\ is the only method that surpasses random selection across different budgets on both datasets. 
    MP and HL finetune the target model on full data to find subsets and thus do not yield any speedup. 
    }
    \label{fig:llava_bar_plot}
    \vskip -0.1in
\end{figure*}

\section{Experiments}\label{sec:exp}
In this section, we first describe our experimental settings, followed by an empirical evaluation of our method on two widely used VIT datasets in Section~\ref{subsec:main_results}. 
{We then report run time of \alg~and analyze the impact of different design choices (proxy models, number of checkpoints, number of clusters, sampling strategies, and attention matrices) in our approach. Additional results and qualitative analysis are deferred to Appendix~\ref{app:add_exp}.}

\textbf{Models.} For the target LVLMs, we use the pre-trained LLaVA-1.5-7B model~\citep{liu2024improved}. For the proxy model, we use the TinyLLaVA-2B \citep{zhou2024tinyllava}. 

\textbf{VIT datasets.} We apply coreset selection to two separate VIT datasets: LLaVA 665k~\citep{liu2024improved} and Vision-Flan~\citep{xu2024vision}. LLaVA 665k comprises 665k VIT examples collected from 12 vision-language datasets. On the other hand, Vision-Flan consists of 191 vision-language tasks, each featuring approximately 1k expert-labeled VIT samples, amounting to a total of 186k instances.

\textbf{Training details.} In all
experiments, we fine-tune the target models using LoRA~\citep{hu2022lora} for one epoch, following the official
finetuning hyperparameters specified in LLaVA-1.5. For proxy models, we train without LoRA for one epoch, following the official hyperparameters specified in TinyLLaVA. This results in a total of $T = 7$ checkpoints. For K-means, we set the number of clusters $K$ to 1000.

\textbf{Baselines.}
We compare \ourmethod\ against 11 data selection baselines, including Random selection (\textbf{Rand}), Middle Perplexity (\textbf{MP})~\citep{marion2023less}, and Highest Learnability (\textbf{HL})~\citep{zhou2023lobass}. %
In addition, we consider EL2N~\citep{paul2021deep}, Self-Sup~\citep{sorscher2022beyond}, CLIP-Score~\citep{hessel2021clipscore}, SemDeDup~\citep{abbas2023semdedup}, D2 Pruning~\citep{maharana2023d2}. 
We also include two recent methods proposed for LVLMs: 
Self-Filter \citep{chen2024your} and COINCIDE~\citep{lee2024concept}. 
\looseness=-1

\textbf{Evaluation.} We evaluate the performance of the fine-tuned target models on reasoning, hallucinations, perception, and cognition capabilities. For all the experiments, we evaluate the aforementioned capabilities of the fine-tuned model on POPE~\citep{li2023evaluating}, TextVQA~\citep{singh2019towards}, MME-Perception~\citep{liang2024survey}, ScienceQA~\citep{lu2022learn}, VizWiz~\citep{gurari2018vizwiz}, MMBench~\citep{liu2024mmbench}, LLaVABench~\citep{liu2023visual}, MMVet~\citep{yu2023mm}, VQAv2~\citep{goyal2017making} and GQA~\citep{hudson2019gqa} datasets. We follow the same evaluation protocols outlined in LLaVA-1.5. %
We measure the relative performance of subsets as (subset performance /
full data performance) $\times$ 100\%. %
To compare between different methods, we \textbf{A}verage the \textbf{R}elative \textbf{P}erformance (ARP) across all evaluation datasets.

\subsection{Main results}\label{subsec:main_results}
\textbf{LLaVA 665k dataset.}
As shown in Fig.~\ref{fig:llava_bar_plot} left, \ourmethod\ outperforms all baselines across different budgets. Relying on a single metric is worse than random selection for small budgets of 10-30\% and worse than COINCIDE. This set of experiments confirms the observation made by~\citep{lee2024concept} that single metrics lead to biased and redundant selection that, in turn, leads to sub-optimal results. Notably, \ourmethod~reaches the performance of fine-tuning on the full datasets 
at only 50\%, which is 2x data reduction compared to fine-tuning on the full dataset. {This is 30\% more data reduction over the best baselines as shown in Fig.~\ref{fig:budget_to_100}.}
When training LLaVA-13B on the subsets, Fig.~\ref{fig:llava_13b} demonstrates that \ourmethod~still achieves the highest ARP, demonstrating its effectiveness for training larger LVLMs.

\textbf{Vision-Flan dataset.} Fig.~\ref{fig:llava_bar_plot} right demonstrates the superior performance of \ourmethod, consistently outperforming random sampling by at least 2\% while other baselines fail to surpass random at all budgets.
Our method matches the full performance with only 15\% of the total samples, 
which is 6.7x data reduction compared to fine-tuning on the full dataset.

\vspace{-1mm}
\subsection{Ablation studies and Analysis}\label{subsec:ablation}
\vspace{-1mm}

\textbf{Computation time.} Fig.~\ref{fig:time_accuracy} reports the total time for data selection (proxy fine-tuning + trajectory extraction) and target model fine-tuning on LLaVA-665k. The clustering and sampling steps in our method incur negligible cost. We observe that \ourmethod\ at 50\% data retains full-dataset performance while delivering a 1.2$\times$ speedup. 
Although COINCIDE's feature extraction is faster, it can only discard 10\% of the data, resulting in higher fine-tuning costs for the target model and making it 1.1$\times$ slower than full training.

\begin{figure*}[t!]
    \centering
    \begin{subfigure}{0.32\textwidth}
        \centering
        \includegraphics[width=\columnwidth]{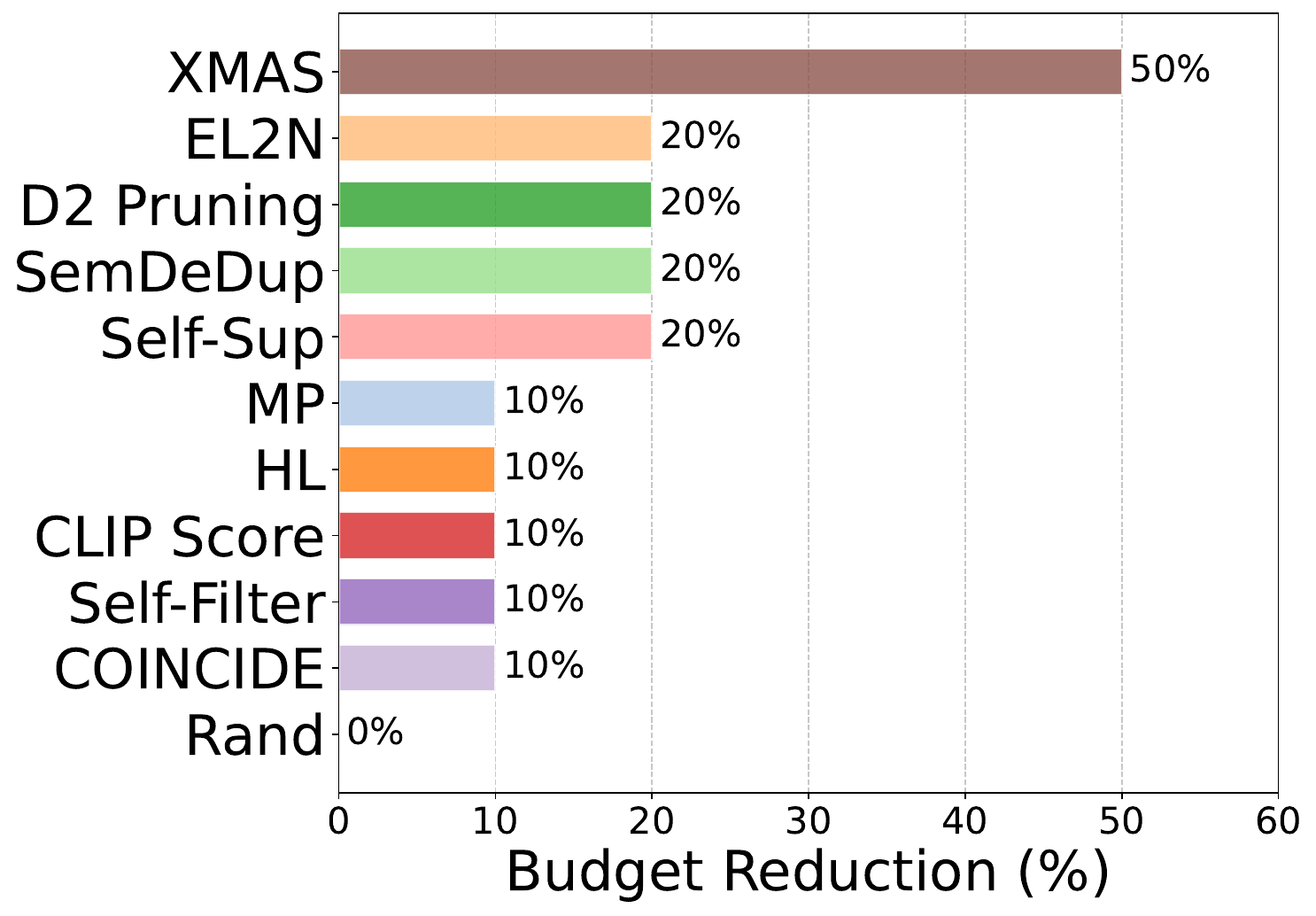}
        \caption{Data reduction to 100\%}
        \label{fig:budget_to_100}
    \end{subfigure}
    \hfill
    \begin{subfigure}{0.32\textwidth}
        \centering
        \includegraphics[width=\columnwidth]{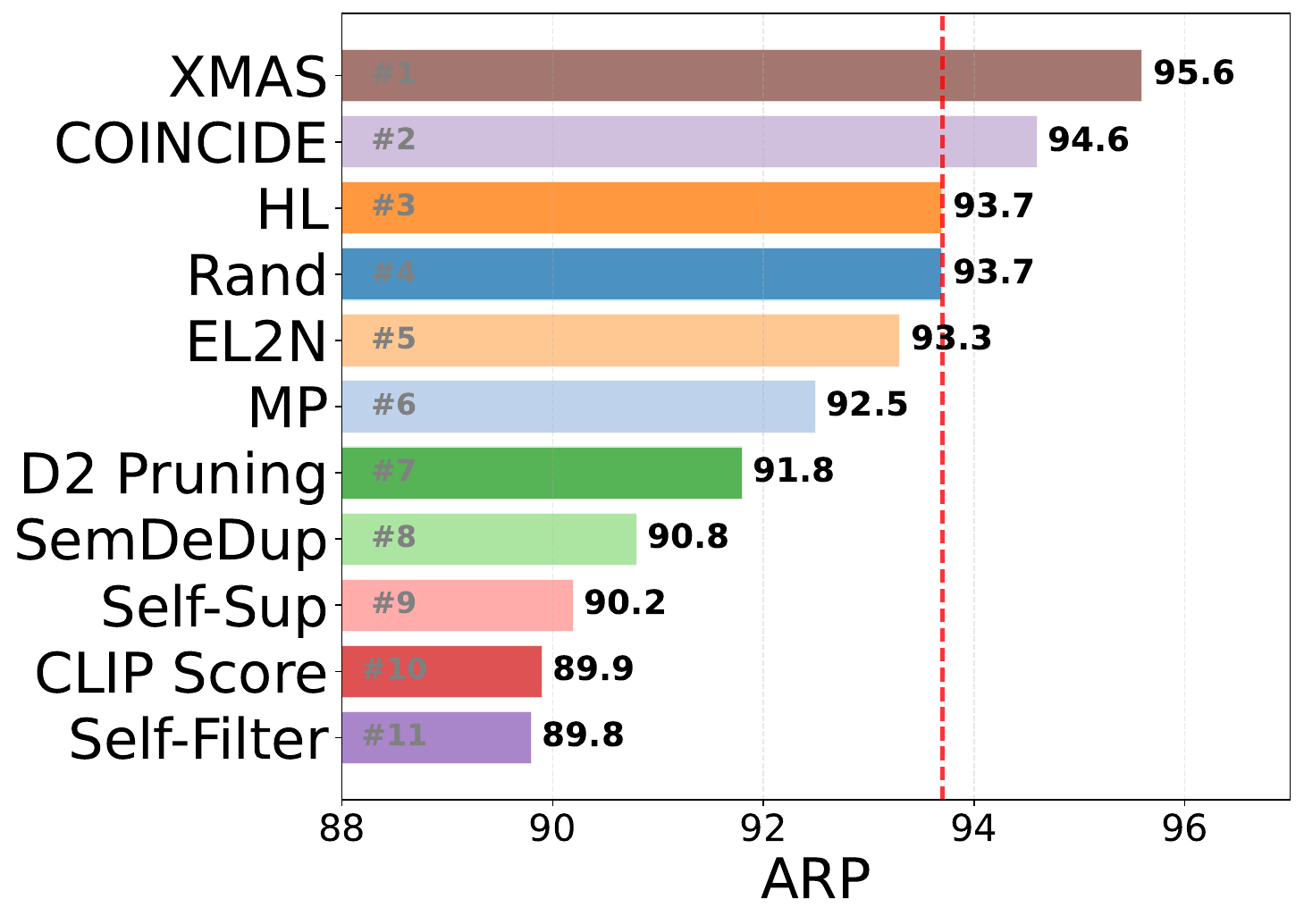}
        \caption{LLaVA-13B}
        \label{fig:llava_13b}
    \end{subfigure} 
    \hfill
    \begin{subfigure}{0.32\textwidth}
        \centering
        \includegraphics[width=\columnwidth]{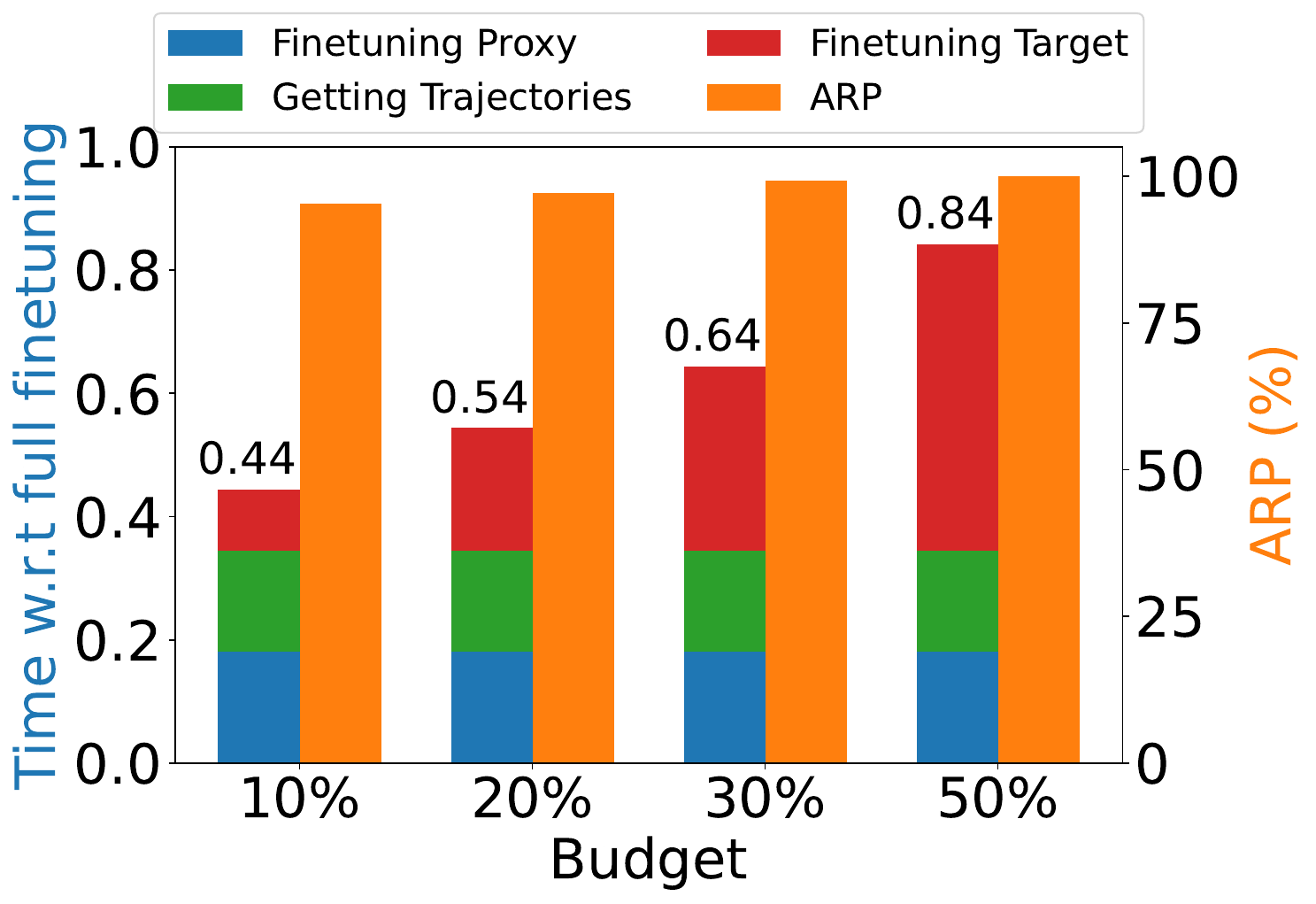}
        \caption{\alg\ Time and ARP}
        \label{fig:time_accuracy}
    \end{subfigure}
    \vspace{-2mm}
    \caption{(left) Data reduction to reach 100\% Average relative performance (ARP) of LLaVA-1.5-7B on LLaVA 665k. \alg\ obtains 30\% more data reduction over the best baselines.
    (middle) ARP ranking of different 10\% subsets of LLaVA 665k when fine-tuning LLaVA-1.5-13B.
    (right) ARP and ratio of total time (selection + training) w.r.t training target model on full dataset for~\ourmethod\ at different budgets on LLaVA 665k. \alg\ reduces training time by a factor of 0.84 (1.2$\times$ speedup) to reach 100\% ARP.
    }
    \label{tab:distribution_and_time}
\end{figure*}

\textbf{Different proxy models.} To investigate the effect of the proxy model, we try two different scales of TinyLLaVA which are 0.5B %
and 2B. Fig.~\ref{fig:vary_proxy} compares the performance of COINCIDE and \ourmethod~when varying the proxy models. \ourmethod~outperforms COINCIDE for different proxy models.  
\begin{figure*}[t!]
    \centering
    \begin{subfigure}{0.31\textwidth}
        \centering
        \includegraphics[width=\columnwidth]{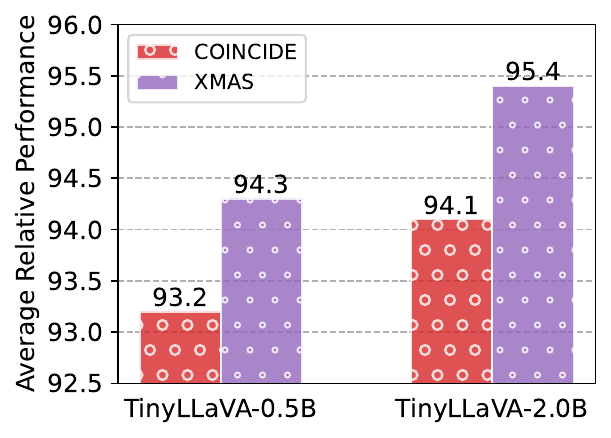}
        \caption{Proxy models}
        \label{fig:vary_proxy}
    \end{subfigure} 
    \hfill
    \begin{subfigure}{0.3\textwidth}
        \centering
        \includegraphics[width=\columnwidth]{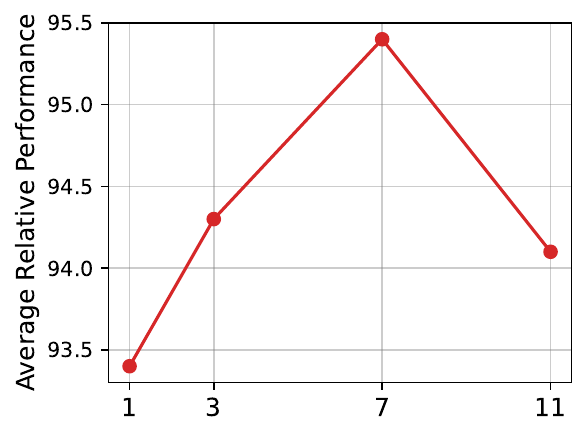}
        \caption{Number of checkpoints}
        \label{fig:vary_num_ckpts}
    \end{subfigure}
    \hfill
    \begin{subfigure}{0.3\textwidth}
        \centering
        \includegraphics[width=\columnwidth]{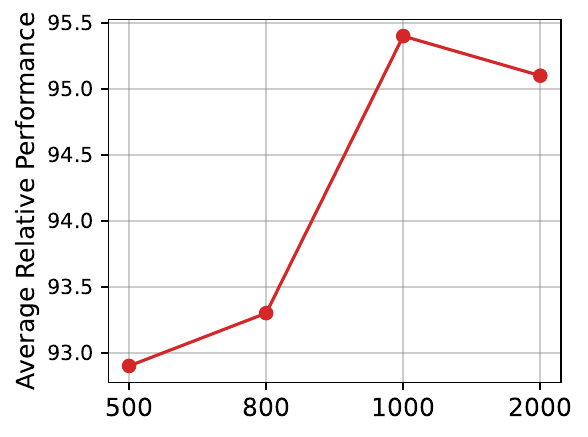}
        \caption{Number of clusters}
        \label{fig:vary_num_clusters}
    \end{subfigure}
    \vspace{-2mm}
    \caption{Average performance relative to full data for 10\% subsets of LLaVA 665k found by COINCIDE and \ourmethod~when varying (left) proxy model, (middle) number of checkpoints and (right) number of clusters.}
    \label{tab:ablation_studies}
    \vskip -0.2in
\end{figure*} 

\begin{figure}[!t]
    \centering
    \begin{minipage}{0.5\textwidth}
        \centering
        \includegraphics[width=0.65\textwidth]{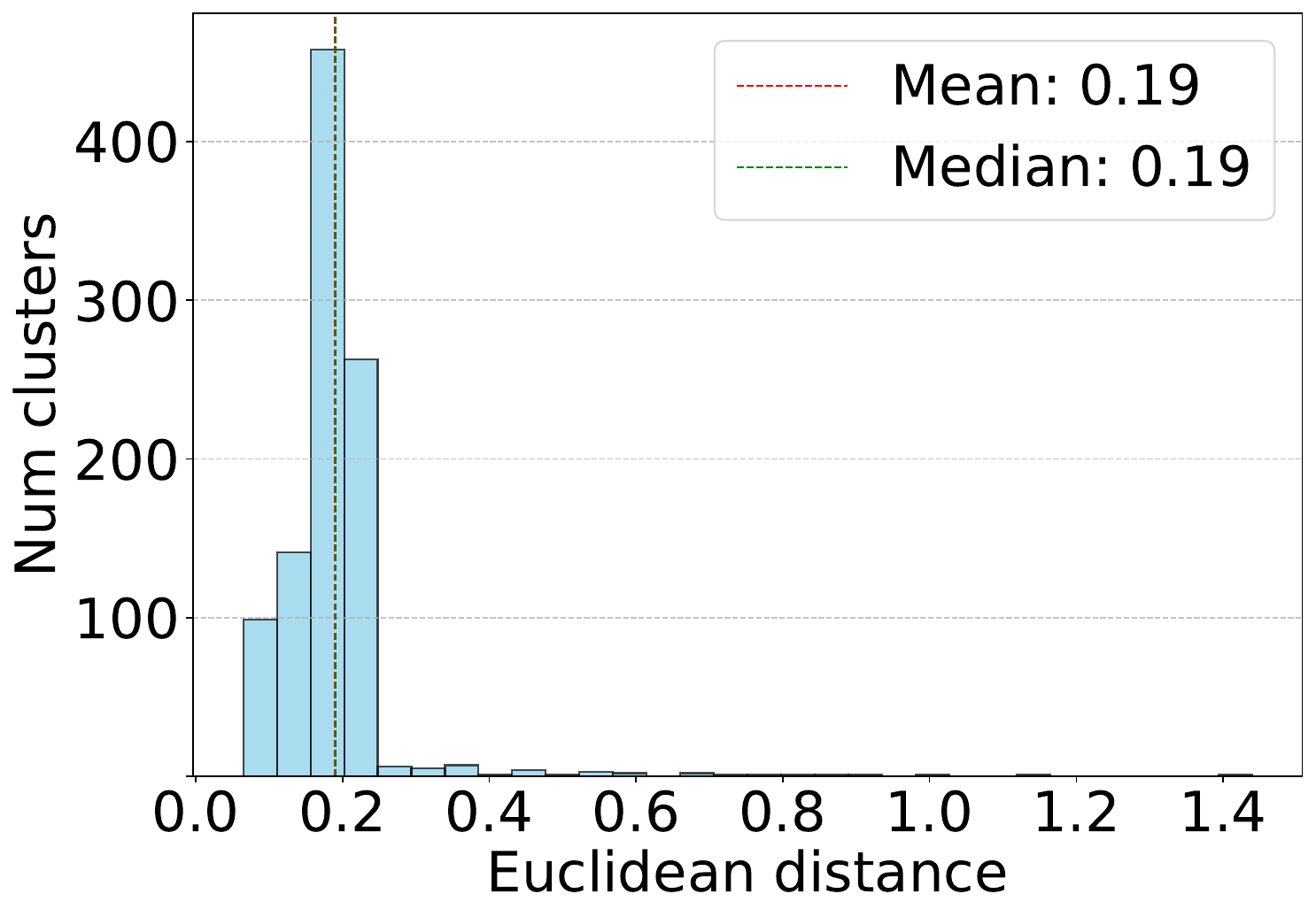} %
        \caption{Per-cluster Euclidean distance between the cross-modal alignment trajectories of proxy model and target model on LLaVA 665k. Distance between trajectories of proxy and target models is very small.}
        \label{fig:proxy_target_traj}
    \end{minipage}%
    \hfill
    \begin{minipage}{0.45\textwidth}
        \centering
        \captionof{table}{Average relative performances (ARP) over full data when training LLaVa-1.5-7B on 10\% subsets of LLaVA 665K found by \ourmethod~when using different (left) cluster sampling strategies and (b) attention matrices.}
        
        \begin{subtable}[t]{0.48\textwidth}
            \centering
            \begin{tabular}{l c}
                \toprule
                Strategy & ARP \\
                \midrule
                Random      & 94.1 \\
                Instability   & \textbf{95.4} \\
                \bottomrule
            \end{tabular}
            \caption{Cluster sampling}\label{tab:cluster_sampling}
        \end{subtable}
        \hfill
        \begin{subtable}[t]{0.48\textwidth}
            \centering
            \begin{tabular}{l c}
                \toprule
                Attn matrix & ARP \\
                \midrule
                Full    & 94.0 \\
                Cross-modal   & \textbf{95.4} \\
                \bottomrule
            \end{tabular}
            \caption{Attention matrix}\label{tab:attn_matrix}
        \end{subtable}
        
        \label{tab:ablation_study1}
        \vspace{-4mm}
    \end{minipage}
\end{figure}

\textbf{Number of checkpoints.} Fig.~\ref{fig:vary_num_ckpts} illustrates the performance of \ourmethod~for varying number of proxy checkpoints. Using 7 checkpoints covering one pass of the full dataset yields the best performance of 95.4\%. While using only 1(3) checkpoint reduces the computation cost, it harms the performance by roughly 2(1)\%. Notably, increasing the number of checkpoints to 11 not only increases the computation cost but also decreases the relative performance to 94.1\%.

\textbf{Number of clusters.} Fig.\ref{fig:vary_num_clusters} illustrates the performance of \ourmethod~for different numbers of clusters ($K$). Using 1,000 clusters yields the best performance of 95.4\%. Increasing the number of clusters to 2,000 slightly lowers performance to 95.1\% while fewer clusters (500 or 800) reduce performance to 92.9\% and 93.3\%. \looseness=-1%

\textbf{Proxy vs target trajectories.} Fig.~\ref{fig:proxy_target_traj} illustrates the per-cluster Euclidean distance between the alignment trajectories of proxy model (TinyLLaVA-2B) and target model (LLaVA-1.5-7B) on LLaVA 665k, i.e., $\sum_{i \in C_k} \|T_i^{\text{proxy}} - T_i^{\text{target}}\|_2 / |C_k|$. We see clearly that the trajectories of proxy and target models are similar as indicated by small Euclidean distance $(< 0.25)$ for most of the clusters. \looseness=-1

\textbf{Cluster sampling strategy.} Table~\ref{tab:cluster_sampling} illustrates that cluster sampling based on the instability score performs better than random sampling by 1.3\%. This validates our intuition in~\ref{subsec:stability_sampling}, which shows that sampling based on instability scores guides \ourmethod~to select more representative examples.

\textbf{Choice of attention matrices.} 
Next, we study the effect of using the full vs cross-modal attention matrices.
Table~\ref{tab:attn_matrix} shows that using only the cross-modal terms works better than using both intra-modal and cross-modal terms. This confirms that the cross-modal attention provides more informative signals than intra-modal for multi-modal data selection.

Additional ablation studies on the choice of alignment score, layer aggregation strategy, instability score, attention layers, and number of singular values are in Appendix~\ref{app:add_exp}. %
Moreover, we show that COINCIDE is sensitive to the choice of layers used for feature extraction. Qualitative results of \alg\ clusters are given in Fig.~\ref{fig:stacked_images1}-\ref{fig:stacked_images4}. \looseness=-1

\vspace{-1mm}\section{Conclusion}\label{sec:conclusion}
\vspace{-1mm}
In this work, we present %
\ourmethod, an effective data selection method to improve data efficiency of visual instruction tuning for Large Vision Language Models (VLMs). By clustering examples based on their cross-modal alignment trajectories and sampling a balanced set of examples with the most stable trajectories from all the clusters, \ourmethod\ results in a significant reduction of the required training data without compromising the performance as compared to training on the complete dataset. 
Empirically, XMAS can drop 50\% of LLaVA 665K and 85\% of VisionFLAN datasets and train  LLaVA-1.5-7B to the same accuracy of full data. 

\section*{Acknowledgments}
This work was partially supported by the NSF CAREER Award 2146492, NSF-Simons AI Institute for Cosmic Origins (CosmicAI), NSF AI Institute for Foundations of Machine Learning (IFML), and an Okawa Research Award.

\bibliography{iclr2026_conference}
\bibliographystyle{iclr2026_conference}

\newpage
\appendix
\section{Proofs}\label{app:proof}

\textbf{Notations.} Consider a data set with $|\mathcal{D}_\text{VIT}|$ samples where each sample consists of $N = n_I + n_T$ number of tokens with embedding dimension D. We denote the dataset as ${(X_i,y_i)}^{\mathcal{D}_\text{VIT}}_{i=1}$, where $X_i \in \mathbb{R}^{N \times d}$, and $y_i \in \mathbb{R}^N$ is the label of the dataset. Following the settings in ~\cite{ormaniec2024does}, we consider a single-layer attention Transformer model with a single head. Let $W_Q, W_K, W_V \in \mathbb{R}^{d \times D}$ denote the weight matrices of Q, K, and V projection matrices, respectively. 
The output from the Transformer model is formulated as:

\begin{align}
    A_i &= \frac{X_iW_Q W_K^T X^T_i}{\sqrt{D}} \\
    S_i &= \text{Softmax}(A_i) = \text{Softmax}\bigg(\frac{X_i W_Q W_K^T X^T_i}{\sqrt{D}}\bigg) \\
    F_i &= S_iX_iW_V
\end{align}
where $D$ is the hidden dimension of the model.

We train the above Transformer using the Frobenius norm squared loss function.

\begin{align}
    \mathcal{L}_i = \frac{\|F_i - Y_i\|_F^2}{ND}
\end{align}

Note that, for brevity, we use $S_i$ instead of $A_i^l(\phi)$ in Equation~\ref{eq:per_layer_attn} to denote the attention matrix. In addition, we write the cross-modal attention matrix $\chi_i(\phi)$ as $\chi_i$.

\textbf{Assumptions.} We assume that the model embedding $X_i$ and weight matrices $W_{\{Q, K, V\}}$ have bounded norms. Let $X, Q, K$, and $V$ be their corresponding upper bounds. In addition, the attention matrix $S_i$ also has a bounded norm $S$ as each entity is bounded between 0 and 1. Furthermore, we assume that the model always makes a reasonable prediction which is not far from the ground-truth label. In other words, the loss is bounded and we denote the upper bound of the norm of the difference between the predicted output and ground-truth label as $\alpha = \sup_i \| F_i - y_i \|$.

\textbf{Bound of target alignment trajectories.} From the assumption that the alignment trajectories of proxy and target models are close, we have:
\begin{equation}
\|\chi^{\text{proxy}}_i - \chi^{\text{target}}_i\| \leq T, \quad \forall i.
\end{equation}

For $i$ and $j$ in the same cluster $C_k$ based on the proxy model, we have:
\begin{equation}
\|\chi^{\text{proxy}}_i - \chi^{\text{proxy}}_j\| \leq K_{ij}. 
\end{equation}

Using the triangle inequality:
\begin{equation}
\begin{aligned}
\|\chi^{\text{target}}_i - \chi^{\text{target}}_j\| &\leq \|\chi^{\text{target}}_i - \chi^{\text{proxy}}_i\| + \|\chi^{\text{proxy}}_i - \chi^{\text{proxy}}_j\| + \|\chi^{\text{proxy}}_j - \chi^{\text{target}}_j\| \\
&\leq 2T + K_{ij} = \epsilon'. 
\end{aligned}
\end{equation}

Therefore, for two samples $i$ and $j$ in the same cluster $C_k$ at any iteration $t$, we have:
\begin{equation}
\|\chi^{\text{target}}_i - \chi^{\text{target}}_j\| \leq \epsilon', \quad \forall t. 
\end{equation}

\textbf{Gradient decomposition.} Before bounding the gradient difference of the target model, we introduce the expressions of the model gradient w.r.t different weight matrices in the following lemma.

\begin{lemma}\label{lemma:weight_jacobians}
Jacobians of the attention weight matrices $W_{\{Q, K, V\}}$ have the following form: 
    
    \begin{enumerate}
        \item $\nabla_{W_V} \mathcal{L}_i(\phi^t) = \frac{2}{ND}(F_i - y_i)(S_iX_i \otimes I_{D}) =  \frac{2}{ND} (S_i X_i W_V - y_i)(S_iX_i \otimes I_{D})$
        \item $\nabla_{W_Q} \mathcal{L}_i(\phi^t) = \frac{2}{ND}(S_i X_i W_V - y_i)(I_N \otimes W_V^T X_i^T) 
            \frac{\partial S_i}{\partial A_i} 
            \frac{X_i \otimes X_i W_K}{\sqrt{D}}$
        \item $\nabla_{W_K} \mathcal{N}_i(\phi^t) = \frac{2}{ND}(S_i X_i W_V - y_i)(I_N \otimes W_V^T X_i^T) 
            \frac{\partial S_i}{\partial A_i} 
            (\frac{X_i \otimes X_i W_Q}{\sqrt{D}})\Lambda_{D, d}$
    \end{enumerate}
\end{lemma}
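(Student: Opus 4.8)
The plan is to derive all three gradients from a single mechanism: write the per-example map as the composition $W_{\{Q,K,V\}} \mapsto A_i \mapsto S_i \mapsto F_i \mapsto \mathcal{L}_i$ and apply the vectorized chain rule, evaluating each Jacobian factor with the row-major vec identity $\text{vec}(AXB) = (A \otimes B^T)\,\text{vec}(X)$ (the convention of \cite{ormaniec2024does}, which controls the ordering of every Kronecker product below). Because $\mathcal{L}_i = \frac{1}{ND}\|F_i - Y_i\|_F^2$, the outermost factor is always $\frac{\partial \mathcal{L}_i}{\partial \text{vec}(F_i)} = \frac{2}{ND}(F_i - Y_i) = \frac{2}{ND}(S_i X_i W_V - y_i)$, which is exactly the leading term appearing in all three claimed expressions. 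The remaining work is to peel off the intermediate Jacobians one stage at a time.

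For part (1) I would observe that $\mathcal{L}_i$ depends on $W_V$ only through $F_i = (S_i X_i)\,W_V$, which is linear in $W_V$. The vec identity with $A = S_i X_i$, $B = I_D$ gives $\partial\,\text{vec}(F_i)/\partial\,\text{vec}(W_V) = S_i X_i \otimes I_D$, so a single application of the chain rule yields $\nabla_{W_V}\mathcal{L}_i = \frac{2}{ND}(F_i - y_i)(S_i X_i \otimes I_D)$, matching the claim.

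For part (2) I would chain through all four stages. After the outer factor above, $F_i = S_i\,(X_i W_V)$ with $S_i$ as the variable gives $\partial\,\text{vec}(F_i)/\partial\,\text{vec}(S_i) = I_N \otimes W_V^T X_i^T$; the softmax map contributes its Jacobian $\frac{\partial S_i}{\partial A_i}$, which I would leave symbolic; and writing $A_i = \frac{1}{\sqrt D} X_i\,W_Q\,(X_i W_K)^T$ with $W_Q$ as the middle factor gives $\partial\,\text{vec}(A_i)/\partial\,\text{vec}(W_Q) = \frac{1}{\sqrt D}(X_i \otimes X_i W_K)$. Multiplying the four factors left to right reproduces the stated formula exactly.

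Part (3) follows the identical four-stage chain; the only difference is the last factor, since $W_K$ now enters through its transpose as $A_i = \frac{1}{\sqrt D}(X_i W_Q)\,W_K^T\,X_i^T$. Treating $W_K^T$ as the middle variable produces a Kronecker factor of the form $\frac{1}{\sqrt D}(X_i \otimes X_i W_Q)$, and converting $\text{vec}(W_K^T)$ back to $\text{vec}(W_K)$ introduces the commutation matrix $\Lambda_{D,d}$ as a right factor, giving the claimed expression. I expect the genuine bookkeeping hazard to live entirely here and in the Kronecker orderings of parts (1)--(2): one must keep the vectorization convention fixed throughout so that $S_i X_i \otimes I_D$, $I_N \otimes W_V^T X_i^T$, and $X_i \otimes X_i W_K$ all come out consistently, and then reconcile the transposed-variable factor in part (3) using the commutation identity $\Lambda(A \otimes B) = (B \otimes A)\Lambda$. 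Everything else is routine matrix calculus, and the softmax Jacobian never needs to be expanded.
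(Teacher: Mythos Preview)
Your proposal is correct and follows exactly the approach the paper uses: chain rule through the composition $W_{\{Q,K,V\}}\mapsto A_i\mapsto S_i\mapsto F_i\mapsto\mathcal{L}_i$, with each intermediate Jacobian evaluated via the row-major vec/Kronecker identity, the softmax Jacobian left symbolic, and the commutation matrix introduced for the transposed occurrence of $W_K$. If anything, your write-up is more explicit than the paper's proof, which simply states ``applying chain rule and simplifying'' and ``proceeding as above'' without displaying the individual Kronecker factors; your flagged bookkeeping hazard in part~(3) is real but you have identified the right tool ($\Lambda(A\otimes B)=(B\otimes A)\Lambda$) to resolve it.
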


where $I_L$ denotes the identity matrix of size $L$ and $\Lambda_{D, d}$ is the commutation matrix. 

\begin{lemma}\label{lemma:softmax_bound}
    Bound on the Frobenius norm of the Jacobian of the attention matrix:
    \begin{equation}
        \|\nabla_{A_i}S_i\|_F \le \frac{\sqrt{N}}{2}
    \end{equation}
\end{lemma}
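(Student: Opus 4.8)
The plan is to exploit the fact that the softmax in $S_i = \mathrm{Softmax}(A_i)$ acts independently on each row of $A_i \in \mathbb{R}^{N \times N}$, so the Jacobian $\nabla_{A_i} S_i$ decouples into $N$ independent per-row blocks. Writing $s^{(r)} = \mathrm{softmax}(A_i[r,:]) \in \mathbb{R}^N$ for the $r$-th row of $S_i$, the entry $S_i[r,c]$ depends only on the $r$-th row of $A_i$, so $\partial S_i[r,c]/\partial A_i[r',c'] = 0$ whenever $r \neq r'$. Hence, after vectorizing, $\nabla_{A_i} S_i$ is block diagonal with $N$ blocks $J_r = \mathrm{diag}(s^{(r)}) - s^{(r)} (s^{(r)})^\top$, and therefore
\[
\|\nabla_{A_i} S_i\|_F^2 = \sum_{r=1}^N \|J_r\|_F^2 .
\]
It then suffices to prove $\|J_r\|_F^2 \le \tfrac{1}{4}$ for every row.

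Next I would bound a single per-row block. Fix a probability vector $s = s^{(r)}$ (nonnegative, summing to one) and set $p_2 = \sum_k s_k^2$ and $p_3 = \sum_k s_k^3$. Expanding the squared entries of $J = \mathrm{diag}(s) - s s^\top$ and separating diagonal from off-diagonal contributions, a short calculation gives the identity
\[
\|J\|_F^2 = \sum_k s_k^2(1-s_k)^2 + \sum_{k\neq l} s_k^2 s_l^2 = p_2 - 2 p_3 + p_2^2 .
\]

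The crux is to turn this expression into the clean constant $\tfrac14$. Here I would invoke Jensen's inequality for the distribution $s$: since $x \mapsto x^2$ is convex, $p_3 = \mathbb{E}_{k\sim s}[s_k^2] \ge \big(\mathbb{E}_{k\sim s}[s_k]\big)^2 = p_2^2$. Substituting $-2p_3 \le -2p_2^2$ yields
\[
\|J\|_F^2 \le p_2 - p_2^2 = p_2(1-p_2) \le \tfrac14 ,
\]
where the final step uses $t(1-t) \le \tfrac14$ for $t \in [0,1]$.

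Finally, summing over the $N$ rows gives $\|\nabla_{A_i} S_i\|_F^2 \le N/4$, and taking square roots produces the claimed $\|\nabla_{A_i} S_i\|_F \le \tfrac{\sqrt{N}}{2}$. I expect the main obstacle to be obtaining the \emph{tight} per-row constant $\tfrac14$ rather than a looser one: a naive eigenvalue argument (using $\mathrm{tr}(J) = 1 - p_2 \le 1$ together with $\lambda_{\max}(J) \le \tfrac12$, which follows from Popoviciu's variance inequality since $J$ is the covariance of a categorical law) only yields $\|J\|_F^2 \le \tfrac12$. It is the Jensen step $p_3 \ge p_2^2$ that sharpens the bound and produces the factor $\tfrac{\sqrt{N}}{2}$ exactly.
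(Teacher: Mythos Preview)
Your proof is correct and follows essentially the same route as the paper: both exploit the row-wise block-diagonal structure of the softmax Jacobian, derive the identity $\|J\|_F^2 = p_2 - 2p_3 + p_2^2$, bound each block by $\tfrac14$, and sum over the $N$ rows. Your Jensen step $p_3 \ge p_2^2$ (hence $\|J\|_F^2 \le p_2 - p_2^2 \le \tfrac14$) is in fact a cleaner justification of the per-row constant than the paper's, which simply asserts without argument that the maximum of $p_2 - 2p_3 + p_2^2$ over the simplex is $\tfrac14$, attained at $s = (\tfrac12,\tfrac12,0,\dots,0)$.
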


\subsection{Proof of Theorem \ref{thm:gradient_bound}}\label{proof:gradient_bound}

\begin{proof}
Since we have considered a simplified model with only three parameters, $W_Q$, $W_K$ and $W_V$, our goal now is to find $\|\nabla \mathcal{L}_i(\phi_{target}^t) - \nabla \mathcal{L}_j(\phi_{target}^t)\|_F$
which can be expanded as
\begin{align}\label{eq:bound_decomposition}
\|\nabla \mathcal{L}_i(\phi_{\text{target}}^t) 
- \nabla \mathcal{L}_j(\phi_{\text{target}}^t)\|_F = \sqrt{
\begin{aligned}
&\|\nabla_{W_V} \mathcal{L}_i(\phi_{\text{target}}^t) - \nabla_{W_V} \mathcal{L}_j(\phi_{\text{target}}^t)\|_F^2 \\
&+ \|\nabla_{W_Q} \mathcal{L}_i(\phi_{\text{target}}^t) - \nabla_{W_Q} \mathcal{L}_j(\phi_{\text{target}}^t)\|_F^2 \\
&+ \|\nabla_{W_K} \mathcal{L}_i(\phi_{\text{target}}^t) - \nabla_{W_K} \mathcal{L}_j(\phi_{\text{target}}^t)\|_F^2
\end{aligned}
}
\end{align}

To bound the LHS in Equation~\ref{eq:bound_decomposition}, we first bound each term in the RHS.

\textbf{Bound for $W_V$.} We simplify the first term in Equation~\ref{eq:bound_decomposition} as follows:
\begin{align*}
    &\|\nabla_{W_V} \mathcal{L}_i(\phi_{\text{target}}^t) - \nabla_{W_V} \mathcal{L}_j(\phi_{\text{target}}^t)\|_F \\
    &\stackrel{(i)}{=} \frac{2}{ND} \| (S_iX_iW_V - y_i)(S_iX_i \otimes I_{D}) - (S_jX_jW_V - y_j)(S_jX_j \otimes I_{D})\|_F \\
    &= \frac{2}{ND} \| S_iX_iW_V((S_iX_i - S_jX_j) \otimes I_{D}) + (S_iX_i-S_jX_j)W_V(S_jX_j \otimes I_{D}) \\
    & \quad + y_j((S_jX_j - S_iX_i)\otimes I_{D}) + (y_j - y_i)(S_iX_i\otimes I_{D})\|_F \\
    & \stackrel{(ii)}{\le} \frac{2}{ND} \| S_iX_iW_V((S_iX_i - S_jX_j) \otimes I_{D})\|_F + \frac{2}{ND}\|(S_iX_i-S_jX_j)W_V(S_jX_j \otimes I_{D})\|_F \\
    & \quad + \frac{2}{ND}\|y_j((S_jX_j - S_iX_i)\otimes I_{D})\|_F + \frac{2}{ND}\|(y_j - y_i)(S_iX_i\otimes I_{D})\|_F \\
    & \stackrel{(iii)}{\le} \frac{2SXV}{N}\|(S_i(X_i - X_j) + (S_i-S_j)X_j\|_F + \frac{2SXV}{N}\|(S_i(X_i - X_j) + (S_i-S_j)X_j\|_F \\
    & \quad +\frac{2}{N}B\|(S_i(X_i - X_j) + (S_i-S_j)X_j\|_F + \frac{4BSX}{N} \\
    & \stackrel{(iv)}{\le} \frac{8S^2X^2V}{N} + \frac{4SX^2V}{N}\| S_i - S_j\|_F + \frac{4BSX}{N} + \frac{2BX}{N}\|S_i - S_j\|_F + \frac{4BSX}{N} \\
    & \stackrel{(v)}{\le} \bigg(\frac{4SX^2V}{N} + \frac{2BX}{N}\bigg)\| S_i - S_j\|_F + \frac{8SX}{N}(B + SXV) \numberthis
\end{align*}

where (i) comes from Lemma 1; (ii) uses triangle inequality; (iii) applies assumed bounds; (iv) uses $\|X_i - X_j\|_F \le 2X$; (v) rearranging.

Now, consider the following relation for the attention matrix:
\begin{align}\label{eq:attn_decomposition}
    S_i = \chi_i + \chi^T_i + S^{r}_i
\end{align}

where $\chi_i$ is the cross-modal attention part (bottom-left section) of the attention matrix that is considered in the actual experiments. $\chi_i$ is the same shape as $\chi_i$ with rest of the entries being zero. Similary, $S^r_i$ is the same shape as $S_i$, with entries in the cross-modal attention part being zero and all other entries being non-zero. Now, consider the following difference:
\begin{align*}
    &\|\nabla_{W_V} \mathcal{N}_i(\phi_{\text{target}}^t) - \nabla_{W_V} \mathcal{N}_j(\phi_{\text{target}}^t)\|_F \\
    &\stackrel{(i)}{\le} \bigg(\frac{4SX^2V}{N} + \frac{2BX}{N}\bigg)\| \chi_i - \chi_j + \chi^T_i - \chi^T_j + S^{r}_i - S^{r}_j\|_F + \frac{8SX}{N}(B + SXV) \\
    &\stackrel{(ii)}{\le}\bigg(\frac{8SX^2V}{N} + \frac{4BX}{N}\bigg)\| \chi_i - \chi_j\|_F + \bigg(\frac{4SX^2V}{N} + \frac{2BX}{N}\bigg)\|S^{r}_i - S^{r}_j\|_F + \frac{8SX}{N}(B + SXV) \\
    &\stackrel{(iii)}{\le} \bigg(\frac{8SX^2V}{N} + \frac{4BX}{N}\bigg)\| \chi_i - \chi_j\|_F + \frac{4SX}{L}(4SXV + 3B) \\
    &= \bigg(\frac{8SX^2V}{N} + \frac{4BX}{N}\bigg)\epsilon' + \frac{4SX}{N}(4SXV + 3B)\numberthis
\end{align*}

where (i) from equation \ref{eq:attn_decomposition}; (ii) applies triangle inequality; (iii) $\|S^{r}_i - S^{r}_j\|_F \le 2S$.

Let $f_1(\epsilon') = \bigg(\frac{8SX^2V}{N} + \frac{4BX}{N}\bigg)\epsilon' + \frac{4SX}{N}(4SXV + 3B)$ be an affine function of $\epsilon'$, we can bound the gradient difference w.r.t $W_V$ as
\begin{equation}\label{eq:bound_wv}
\|\nabla_{W_V} \mathcal{L}_i(\phi_{\text{target}}^t) - \nabla_{W_V} \mathcal{L}_j(\phi_{\text{target}}^t)\|_F \le f_1(\epsilon')
\end{equation}

\textbf{Bound for $W_Q$.} We simplify the second term in Equation~\ref{eq:bound_decomposition} as follows:
\begin{align*}
&\|\nabla_{W_Q} \mathcal{L}_i(\phi_{\text{target}}^t) 
- \nabla_{W_Q} \mathcal{L}_j(\phi_{\text{target}}^t)\|_F^2 \\
&\stackrel{(i)}{=} \frac{2}{ND}\bigg\| 
(S_i X_i W_V - y_i)(I_N \otimes W_V^T X_i^T) 
\frac{\partial S_i}{\partial A_i} 
\frac{X_i \otimes X_i W_K}{\sqrt{D}}  - (S_j X_j W_V - y_j)(I_N \otimes W_V^T X_j^T) 
\frac{\partial S_j}{\partial A_j} 
\frac{X_j \otimes X_j W_K}{\sqrt{D}} 
\bigg\|_F \\
&\stackrel{(ii)}{=} \frac{2}{ND} \Big\|
  (S_iX_iW_V - y_i)\,(I_N \otimes W_V^T X_i^T)\,\frac{\partial S_i}{\partial A_i}\,
    \Big(\frac{X_i\otimes X_iW_K - X_j\otimes X_jW_K}{\sqrt{D}}\Big) \\
&\quad
  + \Bigl(
      (S_iX_iW_V - y_i)\,(I_N \otimes W_V^T X_i^T)\,\frac{\partial S_i}{\partial A_i}
      \;-\;
      (S_jX_jW_V - y_j)\,(I_N \otimes W_V^T X_j^T)\,\frac{\partial S_j}{\partial A_j}
    \Bigr) \,
    \,\frac{X_j\otimes X_j\,W_K}{\sqrt{D}}
\Big\|_F \\
&= \frac{2}{ND}\bigg\|
    (S_iX_iW_V - y_i)\,(I_N \otimes W_V^T X_i^T)\,
    \frac{\partial S_i}{\partial A_i}\,
    \Big(\frac{X_i\otimes (X_i - X_j)W_K + (X_i-X_j)\otimes X_jW_K}{\sqrt{D}}\Big) \\
&\quad
  + \Bigl[
      (S_iX_iW_V - y_i)\,\Bigl(
        (I_N \otimes W_V^T X_i^T)\Bigl(\frac{\partial S_i}{\partial A_i} - \frac{\partial S_j}{\partial A_j}\Bigr)
        + (I_N \otimes W_V^T (X_i^T - X_j^T))\,\frac{\partial S_j}{\partial A_j}
      \Bigr) \\
&\quad\qquad
      + \bigg\{\Bigl(
          S_i(X_i - X_j) + (S_i - S_j)X_j
          \Bigr)W_V + (y_j - y_i)\bigg\}
    (I_N \otimes W_V^T X_j^T)\,
    \frac{\partial S_j}{\partial A_j}\,\bigg]
    \frac{X_j\otimes X_j\,W_K}{\sqrt{D}}
\bigg\|_F \\
&\stackrel{(iii)}{\le} \frac{8\sqrt{N}VX^3K}{D^{3/2}} \|F_i - y_i\|_F + \frac{2\sqrt{N}SX^4V^2K}{D^{3/2}} + \frac{\sqrt{N}V^2X^4K}{D^{3/2}}\|S_i-S_j\|_F + \frac{2B\sqrt{N}VX^3K}{D^{3/2}} \\
&= \frac{\sqrt{N}V^2X^4K}{D^{3/2}}\|S_i-S_j\|_F + \frac{2\sqrt{N}VX^3K}{D^{3/2}}\Big[4 \alpha + SVX + B\Big]
\numberthis \label{eq:wq_simplify}
\end{align*}

where (i) from Lemma 1; (ii) rearranging; (iii) uses triangle inequality and applies the assumptions on the bounds.

Plugging Equation~\ref{eq:attn_decomposition} into Equation~\ref{eq:wq_simplify}, we have

\begin{align*}
\|\nabla_{W_Q} \mathcal{L}_i(\phi_{\text{target}}^t) 
- \nabla_{W_Q} \mathcal{L}_j(\phi_{\text{target}}^t)\|_F \nonumber &\le \frac{2\sqrt{N}V^2X^4K}{D^{3/2}}\|\chi_i-\chi_j\|_F + \frac{2\sqrt{N}VX^3K}{D^{3/2}}\Big[4 \alpha + 2SVX + B\Big] \\
&= \frac{2\sqrt{N}V^2X^4K}{D^{3/2}}\cdot \epsilon' + \frac{2\sqrt{N}VX^3K}{D^{3/2}}\Big[4 \alpha + 2SVX + B\Big] \numberthis
\end{align*}

Let $f_2(\epsilon') = \frac{2\sqrt{N}V^2X^4K}{D^{3/2}}\cdot \epsilon' + \frac{2\sqrt{N}VX^3K}{D^{3/2}}\Big[4 \alpha + 2SVX + B\Big]$ be an affine function of $\epsilon'$, we can bound the gradient difference w.r.t $W_Q$ as

\begin{equation} \label{eq:bound_wq}
\|\nabla_{W_Q} \mathcal{L}_i(\phi_{\text{target}}^t) 
- \nabla_{W_Q} \mathcal{L}_j(\phi_{\text{target}}^t)\|_F
\le f_2(\epsilon')
\end{equation}

\textbf{Bound for $W_K$.} Similarly, we have the following bound for the third term

\begin{equation} \label{eq:bound_wk}
\|\nabla_{W_K} \mathcal{L}_i(\phi_{\text{target}}^t) 
- \nabla_{W_K} \mathcal{L}_j(\phi_{\text{target}}^t)\|_F
\le f_3(\epsilon')
\end{equation}

where $f_3(\epsilon') = \frac{2\sqrt{N}V^2X^4Q}{{D^{3/2}}}\cdot \epsilon' + \frac{2\sqrt{N}VX^3Q}{{D^{3/2}}}\Big[4 \alpha + 2SVX + B\Big]$ be affine function of $\epsilon'$.

Plugging all these expressions in~\ref{eq:bound_decomposition}, we have the following expression for the upper bound of the target-model gradient difference

\begin{equation}
    \|\nabla \mathcal{L}_i(\phi^t_{\text{target}}) - \nabla \mathcal{L}_j(\phi^t_{\text{target}})\|_F \leq \sqrt{f_1(\epsilon')^2 + f_2(\epsilon')^2 + f_3(\epsilon')^2}
\end{equation}

Using Cauchy-Schwarz,
\begin{align*}
    \|\nabla \mathcal{L}_i(\phi^t_{\text{target}}) - \nabla \mathcal{L}_j(\phi^t_{\text{target}})\|_F &\leq  \sqrt{f_1(\epsilon')^2 + f_2(\epsilon')^2 + f_3(\epsilon')^2} \\
    &\le
    | f_1(\epsilon') | + | f_2(\epsilon') | + | f_3(\epsilon') |\\
    &= \underbrace{\Bigg[\frac{2\sqrt{N}V^2X^4(K+Q)}{D^{3/2}} + \bigg(\frac{8SX^2V}{N} + \frac{4BX}{N}\bigg) \Bigg]}_{c_1} \cdot \epsilon' \\& + \underbrace{\frac{2\sqrt{N}VX^3}{{D^{3/2}}}\Big[(4 \alpha + 2SVX + B)(Q+K)\Big] + \frac{4SX}{N}(4SXV + 3B)}_{c_2} 
\end{align*}
 
Assume the model weights $Q$, $K$, and $V$ are bounded by $\tfrac{c}{\sqrt{3}}$, so that $\|\phi^t_{\text{target}}\| \leq c$. 
We further assume the ground-truth embedding $B$ and loss $\alpha$ are bounded. 
Such boundedness assumptions are standard in theoretical analyses of Transformers, including generalization and stability~\citep{li2023transformers}, as well as gradient-based optimization dynamics~\citep{song2024unraveling}. 
In practice, they are enforced by weight decay, and layer/activation normalization~\citep{xiong2020layer}. 

Since $X$ passes through an RMS normalization layer with gain $g$, we have $X \leq g \sqrt{ND}$. 
Under this setting, the constants $c_1$ and $c_2$ simplify to:
\begin{align}
    c_1 &= \mathcal{O}\!\left(\tfrac{4}{\sqrt{3}} N^2 \sqrt{ND}\, c^3 g^4 \right), \\
    c_2 &= \mathcal{O}\!\left(\tfrac{8}{3\sqrt{3}} N^3 \sqrt{D}\, c^2 g^4 \right).
\end{align}

If the RMS gain is sufficiently small, i.e., 
\[
    g < N^{-\nicefrac{5}{8}} D^{-\nicefrac{1}{8}} c^{-\nicefrac{3}{4}} 
    \quad \Leftrightarrow \quad 
    g^4 < N^{-\nicefrac{5}{2}} D^{-\nicefrac{1}{2}} c^{-3},
\]
then the bounds on $c_1$ and $c_2$ reduce to:
\begin{align}
    c_1 &\leq \tfrac{4}{\sqrt{3}}, \\
    c_2 &\leq \tfrac{8\sqrt{N}}{3\sqrt{3} c}.
\end{align}

Therefore, the gradient distance can be bounded as
\[
    \|\nabla \mathcal{L}_i(\phi^t_{\text{target}}) - \nabla \mathcal{L}_j(\phi^t_{\text{target}})\|_F 
    \leq \tfrac{4}{\sqrt{3}} \cdot \epsilon' + \tfrac{8\sqrt{N}}{3\sqrt{3}c}.
\]

\end{proof}

\subsection{Proof of Theorem~\ref{thm:trajectory}}\label{proof:trajectory}

We first have the following lemma.

\begin{lemma}[\citep{zhang2023notes}, Theorem 1.8]\label{lemma:lipschitz}
    Let f: $\mathbb{R}^n \rightarrow \mathbb{R}$ be a twice continuously differentiable function. Given $x, y \in \mathbb{R}^n$, for every $z = x + t(y-x)$ where $t \in [0,1]$, there exists a constant, $\zeta= \sup \| \nabla^2 f(x) \|$ , such that 
    \begin{equation}
        \|\nabla f(z) - \nabla f(x)\|_F \le \zeta \|z-x\|_F
    \end{equation}
\end{lemma}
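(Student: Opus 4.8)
The plan is to prove this Lipschitz-gradient bound by the standard route of integrating the Hessian along the segment joining $x$ and $z$. First I would fix $x, y$ and $z = x + t(y-x)$, and restrict attention to the line segment from $x$ to $z$, parameterized by $\gamma(s) = x + s(z - x)$ for $s \in [0,1]$; note every such $\gamma(s)$ still lies on the segment from $x$ to $y$, so the supremum defining $\zeta$ controls $\|\nabla^2 f(\gamma(s))\|$ uniformly in $s$.

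Next I would consider the vector-valued map $h(s) = \nabla f(\gamma(s))$. Since $f$ is twice continuously differentiable, $h$ is continuously differentiable with $h'(s) = \nabla^2 f(\gamma(s))\,(z - x)$ by the chain rule. The fundamental theorem of calculus then gives the integral representation $\nabla f(z) - \nabla f(x) = h(1) - h(0) = \int_0^1 \nabla^2 f(\gamma(s))\,(z-x)\,ds$. Taking norms, passing the norm inside the integral, and applying the operator-norm inequality $\|\nabla^2 f(\gamma(s))(z-x)\| \le \|\nabla^2 f(\gamma(s))\|\,\|z-x\|$ followed by the uniform bound $\|\nabla^2 f(\gamma(s))\| \le \zeta$ yields $\|\nabla f(z) - \nabla f(x)\| \le \zeta\|z-x\|\int_0^1 ds = \zeta\|z-x\|$, which is the claim.

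The only point requiring care is norm bookkeeping: because $f$ is scalar-valued, $\nabla f(z) - \nabla f(x)$ is a vector, so the $\|\cdot\|_F$ in the statement coincides with the Euclidean norm, and for the submultiplicative step I must read $\zeta = \sup\|\nabla^2 f\|$ as the spectral (operator) norm of the Hessian so that $\|\nabla^2 f \cdot v\| \le \zeta\|v\|$ holds; continuity of $\nabla^2 f$ guarantees this supremum over the compact segment is finite and the integrand is integrable. No genuine obstacle arises, since the result is the classical consequence of a bounded Hessian, and I would only need to make sure these norm conventions are stated consistently.
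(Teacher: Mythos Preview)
Your argument is correct and is exactly the standard proof: represent the gradient difference via the fundamental theorem of calculus applied to $h(s)=\nabla f(x+s(z-x))$, pull the norm inside the integral, and use the uniform Hessian bound on the compact segment. Your remarks about norm conventions (Frobenius on a vector equals Euclidean; $\zeta$ must be an operator-norm bound) are the right caveats.

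There is nothing to compare against in the paper: the paper does not prove this lemma at all but simply imports it as \cite{zhang2023notes}, Theorem~1.8, and then invokes it in the proof of Theorem~\ref{thm:trajectory}. Your write-up therefore supplies a self-contained justification where the paper relies on an external citation.
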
 

\begin{proof}
\begin{align}
    &\|\nabla \mathcal{L}_i(\phi^{t_z}_{\text{target}}) - \nabla \mathcal{L}_j(\phi^{t_z}_{\text{target}})\|_F \\
    &\stackrel{(i)}{\le} \|\nabla \mathcal{L}_i(\phi^{t_z}_{\text{target}}) - \nabla \mathcal{L}_i(\phi^{t_1}_{\text{target}})\|_F + \|\nabla \mathcal{L}_j(\phi^{t_z}_{\text{target}}) - \nabla \mathcal{L}_j(\phi^{t_1}_{\text{target}})\|_F + \|\nabla \mathcal{L}_i(\phi^{t_1}_{\text{target}}) - \nabla \mathcal{L}_j(\phi^{t_1}_{\text{target}})\|_F \\
    &\stackrel{(ii)}{\le} 2\beta \| \phi^{t_z}_{\text{target}} - \phi^{t_1}_{\text{target}}\| + \Delta^{t_1}_{ij} \\
    &\stackrel{(iii)}{\le} 2 \delta \beta +  \Delta^{t_1}_{ij}
\end{align}

where (i) from triangle inequality; (ii) from Lemma~\ref{lemma:lipschitz}; (iii) from bounded curvature assumption and the definition of $\delta$. Similarly, we have
\begin{align}
    \|\nabla \mathcal{L}_i(\phi^{t_z}_{\text{target}}) - \nabla \mathcal{L}_j(\phi^{t_z}_{\text{target}})\|_F \leq 2 \delta \beta +  \Delta^{t_2}_{ij} \label{eq:bound_at_t1}
\end{align}

Combining these two above inequalities, we have

\begin{equation}
     \|\nabla \mathcal{L}_i(\phi^{t_z}_{\text{target}}) - \nabla \mathcal{L}_j(\phi^{t_z}_{\text{target}})\|_F \leq 2 \delta \beta +  \max \{\Delta^{t_1}_{ij}, \Delta^{t_2}_{ij}\}
\end{equation}

\end{proof}

\subsection{Convergence of \ourmethod}\label{proof:convergence}

\begin{corollary}[Convergence of \ourmethod]\label{cor:convergence_formatl} 
    Under the assumptions of Theorem~\ref{thm:trajectory}, applying incremental gradient methods with stepsize $\eta$ on subsets found by \alg, converges to a neighborhood of the solution $\phi^*$ found by training on full data:  
    \begin{equation}
    \|\phi^{t+1} - \phi^*\|^2 \leq (1 - \eta c')^{t+1}\|\phi^t - \phi^*\|^2 + \frac{2\xi R'}{c'^2} + \eta B^2 \left(\frac{r_{\min}}{k}\right)^2 g_{\max}^2
    \end{equation}
where $c' \leq \|H\|$, $B$ is the total size of the subset, $g_{\max}$ is the largest gradient norm of individual examples during training, $r_{\min}, r_{\max}$, are the size of the smallest and largest clusters, $R' = \min\{d_0, Bg_{\max} + \xi/c'\}$ and $d_0 = \|\phi^0 - \phi^*\|$ is the initial distance to the optimal solution $\phi^*$, and $\xi = K[r_{\min}\Delta_2 + (r_{\max} - r_{\min})g_{\max}]$.
\end{corollary}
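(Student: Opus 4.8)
The plan is to follow the incremental-gradient coreset analysis of \citep{mirzasoleiman2020coresets}, adapting it to the balanced-sampling scheme of \alg. The argument proceeds in two stages: first bound how well the gradient of the selected subset approximates the full-data gradient at an arbitrary iterate, and then feed this approximation error into a strong-convexity contraction for incremental gradient descent, unrolling the resulting one-step recursion over $t+1$ cycles.

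For the first stage I would fix an iterate $\phi$ and decompose the full gradient cluster by cluster, $\nabla\mathcal{L}(\phi)=\sum_{k=1}^{K}\sum_{i\in C_k}\nabla\mathcal{L}_i(\phi)$. Within each cluster $C_k$, Theorem~\ref{thm:trajectory} guarantees that any two per-example gradients differ by at most $\Delta_2$ throughout training, so I would approximate the aggregated cluster gradient $\sum_{i\in C_k}\nabla\mathcal{L}_i(\phi)$ by $r_{\min}$ copies of a single sampled representative gradient. This introduces two separately controllable errors: the within-cluster substitution error is at most $r_{\min}\Delta_2$ by the $\Delta_2$ bound, while the size-imbalance error from using $r_{\min}$ copies rather than the true $|C_k|$ examples is bounded, using $\|\nabla\mathcal{L}_i\|\le g_{\max}$, by $(|C_k|-r_{\min})g_{\max}\le(r_{\max}-r_{\min})g_{\max}$. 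Summing both contributions over the $K$ clusters yields the aggregate gradient-approximation error $\xi=K[r_{\min}\Delta_2+(r_{\max}-r_{\min})g_{\max}]$, i.e. the balanced subset gradient deviates from $\nabla\mathcal{L}(\phi)$ by at most $\xi$.

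For the second stage I would invoke the convergence machinery for incremental gradient methods under the curvature bound $c'$ inherited from the bounded-Hessian assumption of Theorem~\ref{thm:trajectory}. Processing the $B$ subset examples one at a time, one tracks $\|\phi^{t+1}-\phi^*\|^2$ across a full cycle; expanding the square and applying the $c'$-curvature inequality produces the contraction factor $(1-\eta c')$. The approximation error $\xi$ then enters as a bias term which, after bounding the distance to the optimum by $R'=\min\{d_0,\,Bg_{\max}+\xi/c'\}$, contributes the additive constant $\tfrac{2\xi R'}{c'^2}$; the staleness incurred by processing examples sequentially rather than as a single batch contributes the incremental-error term $\eta B^2(r_{\min}/k)^2 g_{\max}^2$, scaling with the step size and the squared per-cluster sampling rate. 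Unrolling the one-step recursion $\|\phi^{t+1}-\phi^*\|^2\le(1-\eta c')\|\phi^{t}-\phi^*\|^2+(\text{constants})$ over $t+1$ cycles gives the stated bound.

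I expect the main obstacle to be the first stage: cleanly converting the pairwise clustering guarantee of Theorem~\ref{thm:trajectory} into a single aggregate bound on the subset gradient, while simultaneously accounting for the fact that balanced sampling does not reproduce the true cluster masses. The bookkeeping that cleanly separates the within-cluster gradient-spread term $r_{\min}\Delta_2$ from the size-imbalance term $(r_{\max}-r_{\min})g_{\max}$ is the delicate part, and it is exactly this decomposition that fixes the form of $\xi$, and hence the radius of the neighborhood, in the final bound.
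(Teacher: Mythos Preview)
Your proposal is correct and follows essentially the same approach as the paper: the paper likewise decomposes the subset-gradient error cluster by cluster into a within-cluster term bounded via the $\Delta_2$ guarantee of Theorem~\ref{thm:trajectory} and a size-imbalance term bounded by $(r_{\max}-r_{\min})g_{\max}$, sums over $K$ clusters to obtain the stated $\xi$, and then plugs this into the incremental-gradient convergence result of \citep{mirzasoleiman2020coresets} (via the adaptation in \citep{yang2024smalltolarge}) to get the final recursion. The only minor imprecision is that you describe the approximation as ``$r_{\min}$ copies of a single sampled representative,'' whereas the algorithm actually samples $k$ representatives per cluster weighted by $r_{\min}/k$; this does not affect the error bound but is the origin of the $(r_{\min}/k)^2$ factor you correctly include later.
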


Our proof is similar to the one in S2L~\citep[Appendix~A.2]{yang2024smalltolarge}. We provide the details in order to ensure completeness.

\begin{proof}
Without loss of generality, assume we select $k$ examples from each cluster and we have $k \leq \min_{j \in [K]} |C_j|$. Then the error of the subset in capturing the full gradient will be

\begin{equation}
\xi \leq \sum_{j} (|C_j| - k)\left(\bar{g}_j + \Delta\right),
\end{equation}

where $\bar{g}_j$ is the norm of the average gradient of all samples from $C_j$. Here $\Delta$ is the maximum error in gradients between two different samples.

In practice, we can weight elements of the subset by $r_{\min}/k$, which has a similar effect to scaling the step size when training on the subset. Let $g_{\max} = \max_j \|g_j\|$ be the maximum gradient norm during training, $r_{\max} = \max_j |C_j|$, $r_{\min} = \min_j |C_j|$. Then, we get

\begin{equation}
\xi' \leq \sum_{j} \left[ (r_{\min} - k)\Delta + (|C_j| - r_{\min})(\bar{g}_j + \Delta) \right]
\end{equation}

\begin{equation}
\leq K \left[ r_{\min} \Delta + (r_{\max} - r_{\min}) g_{\max} \right].
\end{equation}

The first term in the RHS of Eq.~(7) is the error of the subset selected from $C_j$ to capture its full gradient and the second term is due to selecting the same number of examples, $k$, from the larger clusters. Using the above error and following the proof of Theorem 1 in~\cite{mirzasoleiman2020coresets}, for a constant step size $\alpha \leq 1/c$ we get:

\begin{equation}
\|\theta^{t+1} - \theta^*\|^2 \leq (1 - \alpha c)^{t+1} \|\theta^t - \theta^*\|^2 + \frac{2\xi'R}{c^2} + \alpha B^2 \left( \frac{r_{\min}}{k} \right)^2 g_{\max}^2,
\label{eq:optimality}
\end{equation}

where $c \leq \|H\|$, and $B = k \cdot K$ is the total size of the subset, $R = \min\{d_0, Bg_{\max} + \xi'/c\}$ and $d_0 = \|\theta^0 - \theta^*\|$ is the initial distance to the optimal solution $\theta^*$. 

If $k \geq |C_j|$ for any cluster $C_j$, one can simply add $(r_{\min}/k - 1) \cdot \hat{g}_j$ to $\xi'$ for the corresponding clusters, where $\hat{g}_j$ is the norm of the total gradient of cluster $C_j$ and we replace $r_{\min}$ in Eq.~(7) with the size of smallest cluster that has larger than $k$ examples.
\end{proof}

\subsection{Proofs of Lemmas}
\textbf{Proof of Lemma~\ref{lemma:weight_jacobians}.}

\begin{proof}

\begin{enumerate}
    \item Applying chain rule and simplifying, we get 
        \begin{align*} 
            &\nabla_{W_V} \mathcal{L}_i(\phi^t) = \nabla_{F_i} \mathcal{L}_i(\phi^t)\cdot \nabla_{W_V} F_i(\phi^t) \\
            &\quad = (F_i - Y_i)(S_iX_i \otimes I_{D})
        \end{align*}
    \item Applying chain rule and  simplifying, we get 
        \begin{align*} 
            &\nabla_{W_Q} \mathcal{L}_i(\phi^t) = \nabla_{F_i} \mathcal{L}_i(\phi^t)\cdot \nabla_{S_i} F_i(\phi^t) \cdot \nabla_{A_i} S_i(\phi^t) \cdot \nabla_{W_Q} A_i(\phi^t) \\
            &\quad = (F_i - Y_i)(I_{N} \otimes W_V^TX_i^T) \cdot \nabla_{A_i} S_i(\phi^t) \cdot (\frac{X_i \otimes  X_i W_K}{\sqrt{D}})
        \end{align*}
    \item Proceeding as above for $\nabla_{W_Q} \mathcal{L}_i(\phi^t)$, we get
         \begin{align*}
             \nabla_{W_K} \mathcal{L}_i(\phi^t) = (S_i X_i W_V - y_i)(I_N \otimes W_V^T X_i^T) 
            \frac{\partial S_i}{\partial A_i} 
            \bigg(\frac{X_i \otimes X_i W_Q}{\sqrt{D}}\bigg) \Lambda_{d, D}
        \end{align*}
        where $\Lambda_{d, D}$ is the commutation matrix. 
\end{enumerate}
\end{proof}

\textbf{Proof of Lemma~\ref{lemma:softmax_bound}.}
\begin{proof}
    For a single sample, $i$, we have 
    $$
    S_i = softmax(A_i)
    $$
    where softmax is applied row-wise to A. The expression for a single element in $S_i$ present at (j, k) is given as:
    $$
    (S_i)_{jk} = \frac{e^{(A_i)_{jk}}}{\sum_{z=1}^Le^{(A_i)_{jz}}}
    $$
    Now, clearly, we have
    $$
    \frac{\partial (S_i)_{jk}}{\partial (A_i)_{mn}} = 0 \;\; \text{if} \; j \neq m
    $$
    Now for a specific row, $i$, the Jacobian of the $i$-th row of $S_i$ wrt the $i$-th row of $A_i$ is given as
    \begin{align*}
    &J^{(j)}_{kn} = \frac{\partial (S_i)_{jk}}{\partial (A_i)_{jn}} = \begin{cases} 
      (S_i)_{jk}\Big(1- (S_i)_{jk}\Big) & \text{if} \; k=n \\
      -(S_i)_{jk}(S_i)_{jn} & \text{if} \; k \neq n 
   \end{cases} \\
   &=(S_i)_{jk}\Big(\delta_{kn}- (S_i)_{kn}\Big)
    \end{align*}

where $\delta_{jm}$ is the Kronecker delta (1 if j=m, 0 otherwise).

 Now the complete Jacobian of the attention matrix can be written as follows:
 
$$
(\nabla_{A_i}S_i)_{jkmn} = \delta_{jm}(S_i)_{jk}\Big(\delta_{kn}- (S_i)_{kn}\Big)
$$

where $\delta_{jm}$ and $\delta_{kn}$ are the Kronecker deltas.

Following is the expression for the Frobenius Norm of this Jacobian:

\begin{align*}
    &\| (\nabla_{A_i}S_i)_{jkmn} \|_F^2 = \sum_{j=1}^N\sum_{k=1}^N\sum_{m=1}^N\sum_{n=1}^N\bigg(\frac{\partial (S_i)_{jk}}{\partial (A_i)_{mn}}\bigg)^2 \\
    &= \sum_{j=1}^N\sum_{m=1}^N\sum_{n=1}^N\bigg(\frac{\partial (S_i)_{jk}}{\partial (A_i)_{jn}}\bigg)^2 \\
    &\| (\nabla_{A_i}S_i)_{jkmn} \|_F^2 = \sum_{j=1}^N\|J^{(j)}\|_F^2 \numberthis \label{eq:jacobian_softmax_norm}
\end{align*}

Now computing $\| J^{(j)}\|_F$ for a single row $j$. Also, let $\xi_p^{(j)} = \sum_{q=1}^N(S_i)_{jq}^p$ Then we have

\begin{align*}
    &\| J^{(j)}\|_F^2 = \sum_{k=1}^N\sum_{n=1}^N\Big(J^{(j)}_{kn}\Big) \\
    &= \sum_{k=1}^N\Big((S_i)_{jk}(1- (S_i)_{jk})\Big)^2 + \sum_{k=1}^N\sum_{n\neq k} \Big( -(S_i)_{jk}(S_i)_{jn}\Big)^2 \\
    &= \xi_2^{(j)} - 2\xi_3^{(j)} + \Big(\xi_2^{(j)}\Big)^2
\end{align*}

Now the maximum value of $\| J^{(j)}\|_F^2$ under constraints, $\xi_1^{(j)} = 1$ and $(S_i)_{jq} \ge 0 \;\;\forall q \in \{0,1,2,\dots, N\}$, is $\frac{1}{4}$ and is achieved when exactly two $q$, $(S_i)_{jq} = \frac{1}{2}$ when $N \ge 2$. Plugging this in equation ~\ref{eq:jacobian_softmax_norm}, we get

\begin{align*}
    &\| (\nabla_{A_i}S_i)_{jkmn} \|_F^2 = \sum_{j=1}^N\|J^{(j)}\|_F^2 \le \sum_{j=1}^N\Big(\frac{1}{4}\Big) = \frac{N}{4}
\end{align*}

Therefore, we have

\begin{align*}
    \| (\nabla_{A_i}S_i)_{jkmn} \|_F \le \frac{\sqrt{N}}{2}
\end{align*}
\end{proof}

\section{Additional experimental settings}\label{app:add_settings}

\begin{figure}[!t]
\vskip -0.2in
\begin{center}
    \scalebox{0.7}{
        \includegraphics[width=1.25\columnwidth]{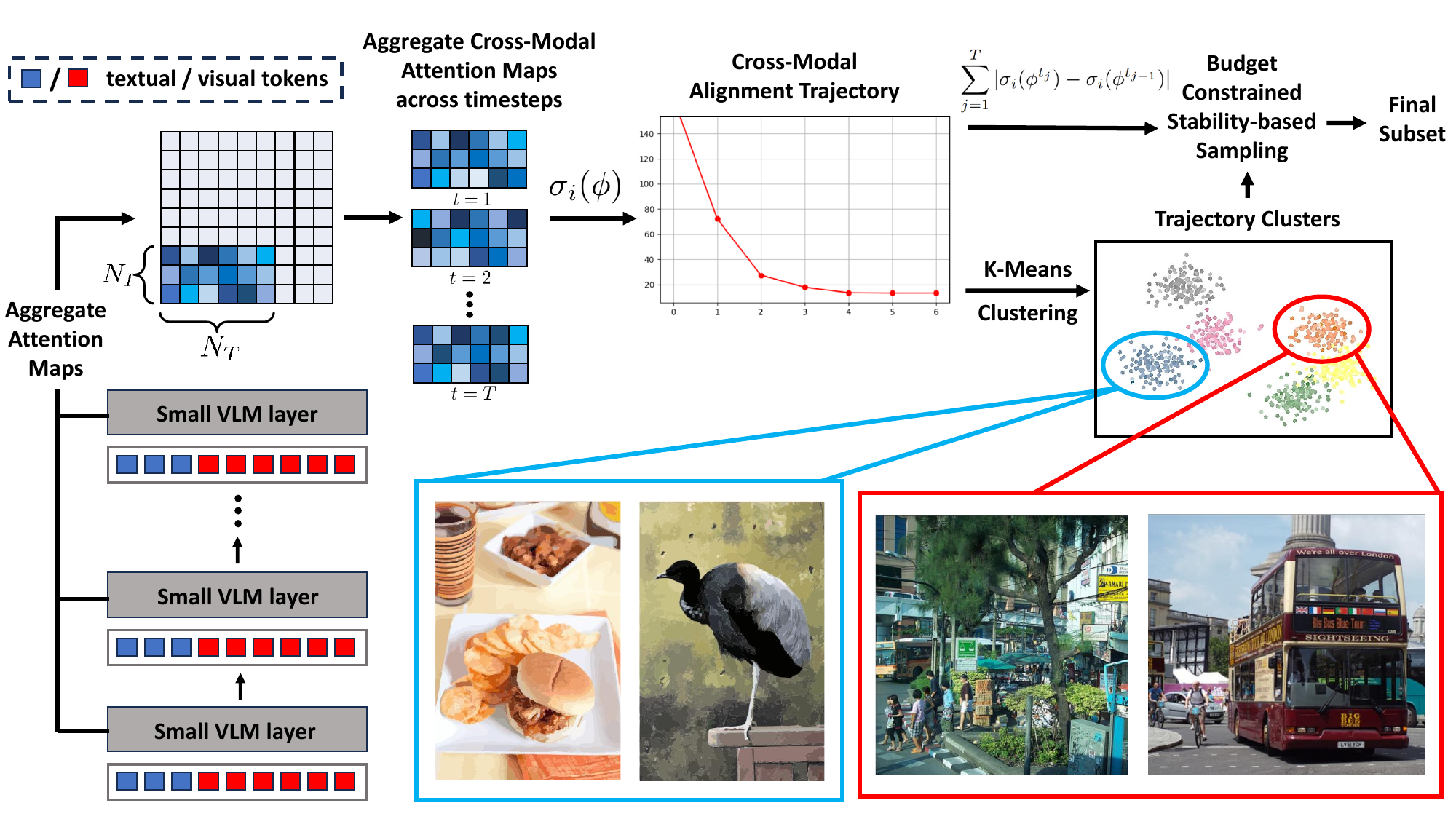}
    }
\end{center}
\caption{\ourmethod\ employs a small proxy VLM to find alignment trajectory for examples in the fine-tuning data. Examples with similar alignment trajectory have similar gradients during instruction tuning. Then, it clusters the alignment trajectories and sample a balanced subset of examples with more stable trajectories from the clusters.
}
\label{fig:method_fig}
\end{figure}

\textbf{Models.} For the target LVLMs, we use the pre-trained LLaVA-1.5-7B and LLaVA-1.5-13B models~\cite{liu2024improved}.
For the proxy models, we use the TinyLLaVA \citep{zhou2024tinyllava} with 2 different scales 0.5B and 2.0B. The default one is TinyLLaVA 2.0B due to its superior performance. Table~\ref{tab:arch} summarizes the language model and vision encoder of different models used in our experiment.

\begin{table}[!t]
    \centering
    \caption{Details of the model architectures used in our experiments are provided below. Model names correspond to their repository names on HuggingFace.}
    \vskip -0.1in
    \begin{tabular}{c|c|c}
        \toprule
        \textbf{Model} & \textbf{Language model} & \textbf{Vision encoder} \\
        \midrule
        LLaVA-1.5-13B & lmsys/vicuna-13b-v1.5 & openai/clip-vit-large-patch14 \\
        LLaVA-1.5-7B & lmsys/vicuna-7b-v1.5 & openai/clip-vit-large-patch14 \\
        TinyLLaVA 2.0B & stablelm/stablelm-2-zerphyr-1\_6b & openai/clip-vit-large-patch14 \\
        TinyLLaVA 0.5B & Qwen/Qwen2-0.5B & google/siglip-so400m-patch14-384 \\
        \bottomrule
    \end{tabular}
    \label{tab:arch}
    \vskip -0.2in
\end{table}

\begin{table}[!t]
\caption{
{A comparative analysis of the average relative performance of LLaVA-1.5-7B using different methods at different sampling ratios on the LLaVA 665k and Vision-Flan datasets.} 
}
\centering
\scalebox{1.}{
\begin{tabular}{l | c c c c | c c c}
\toprule
\multirow{2}{*}{\textbf{Method}} & \multicolumn{4}{c}{\textbf{LLaVA-665K}} & \multicolumn{3}{c}{\textbf{Vision-Flan}} \\
& 10\% & 20\% & 30\% & 50\% & 5\% & 10\% & 15\% \\
\midrule
Rand & 93.5 & 96.3 & 97.4 & 99.2 & 92.7 & 96.2 & 98.4\\
MP~\citep{marion2023less} & 92.4 & 95.4 & 97.9 & 99.6 & 92.9 & 95.6 & 97.7\\
HL~\citep{zhou2023lobass} & 90.2 & 95.4 & 96.6 & 99.8 & 91.4 & 95.8 & 97.5\\
EL2N~\citep{paul2021deep} & 93.3 & 95.0 & 95.6 & 99.3 & 88.7 & 91.9 & 87.4\\
D2 Pruning~\citep{maharana2023d2} & 91.8 & 94.1 & 95.5 & 97.7 & 92.4 & 97.6 & 99.0\\
SemDeDup~\citep{abbas2023semdedup} & 91.4 & 95.9 & 95.4 & 96.8 & 92.6 & 93.0 & 96.3\\
CLIP-Score~\citep{hessel2021clipscore} & 86.1 & 90.8 & 92.3 & 96.2 & 93.1 & 96.6 & 98.1\\
Self-Sup~\citep{sorscher2022beyond} & 85.9 & 91.6 & 94.6 & 97.8 & 85.7 & 92.4 & 99.1\\
Self-Filter~\citep{chen2024your} & 89.3 & 88.4 & 93.7 & 97.7 & 90.3 & 93.6 & 94.5\\
COINCIDE~\citep{lee2024concept} & 94.1 & 96.6 & 98.7 & 98.8 & 92.5 & 97.0 & 98.3\\
\ourmethod~(\textbf{Ours}) & \textbf{95.4} & \textbf{97.1} & \textbf{99.2} & \textbf{100.0} & \textbf{96.1} & \textbf{98.9} & \textbf{100.2}\\
\bottomrule
\end{tabular}
}
\label{tab:detailed_results}
\end{table}

\textbf{VIT datasets.} We apply coreset selection to two distinct vision instruction tuning (VIT) datasets: LLaVA 665k~\cite{liu2024improved} and Vision-Flan~\cite{xu2024vision}, both of which are widely used benchmarks for evaluating multimodal instruction-following models.
LLaVA 665k comprises approximately 665,000 VIT examples aggregated from 12 diverse vision-language datasets. These datasets span a wide range of tasks, such as image captioning, visual question answering, and visual reasoning, providing a comprehensive training mixture for instruction-tuned large vision-language models. The examples are automatically aligned and instruction-formatted, making the dataset suitable for large-scale fine-tuning.
In contrast, Vision-Flan is a more task-structured benchmark composed of 191 individual vision-language tasks. Each task includes around 1,000 high-quality, expert-labeled VIT examples, leading to a total of about 186,000 samples. Unlike LLaVA 665k, which merges data across tasks, Vision-Flan preserves a task-level granularity, allowing for more fine-grained evaluation and task-specific data selection strategies.

\textbf{Training details.} In all
experiments, we fine-tune the target models using LoRA~\cite{hu2022lora} for one epoch regardless of the subset size. We strictly follow the official finetuning hyperparameters specified in LLaVA-1.5. For proxy models, we train full model (i.e., without LoRA) for one epoch, following the official hyperparameters specified in TinyLLaVA. This results in a total of $T = 7$ checkpoints for the trajectory. For distributed training, we use DeepSpeed~\cite{rasley2020deepspeed}. For K-means, we set the number of clusters $K$ to 1000 and use the GPU version of the faiss library~\cite{johnson2019billion}.

\textbf{Evaluation datasets.} 
We evaluate the performance of the fine-tuned target models across four core capabilities: reasoning, hallucination resistance, visual perception, and cognition. To comprehensively assess these abilities, we utilize a diverse suite of both academic-task-oriented benchmarks and recent benchmarks tailored for instruction-following LMMs, totaling ten evaluation datasets.

For visual perception and cognition, we include VQAv2\cite{goyal2017making} and GQA\cite{hudson2019gqa}, which require open-ended answers to visual questions, assessing the model’s ability to understand and interpret images. VizWiz\cite{gurari2018vizwiz}, a dataset comprising real-world images taken by visually impaired users, is used to test the model’s zero-shot generalization capabilities in a more challenging, accessibility-focused setting. TextVQA\cite{singh2019towards} measures performance on text-rich visual inputs, challenging models to combine OCR with multimodal reasoning. For science-focused reasoning, we adopt the image subset of ScienceQA~\cite{lu2022learn}, which consists of multiple-choice scientific questions accompanied by relevant visual content.

To evaluate hallucination behavior, we employ POPE~\cite{li2023evaluating}, which measures the model’s tendency to generate factually incorrect information in multimodal contexts. POPE includes three subsets—random, common, and adversarial samples from the COCO dataset and we report the average F1 score across all splits.

For general reasoning and robustness, we use several recently proposed benchmarks. MME-Perception\cite{liang2024survey} tests perception using binary (yes/no) questions based on visual content, while MMBench\cite{liu2024mmbench} evaluates the robustness of multiple-choice answers across a broad range of tasks. MMVet\cite{yu2023mm} and LLaVABench\cite{liu2023visual} focus on visual conversation abilities, evaluating both the correctness and helpfulness of model responses using GPT-4 as a judge.

By covering a wide spectrum of domains, ranging from scientific reasoning and accessibility to free-form visual conversations, this evaluation protocol provides a comprehensive measure of how well the fine-tuned models generalize across real-world and task-specific scenarios.

\textbf{Evaluation metric.} We follow the same evaluation protocols outlined in LLaVA-1.5. Similar to COINCIDE, we measure the relative performance as (model performance /
full-finetuned performance) × 100\% to assess the performance of subsets compared to full dataset. To compare between different methods, we \textbf{A}verage the \textbf{R}elative \textbf{P}erformance (ARP) across all evaluation datasets.

\textbf{Computational resources.} All
experiments are conducted using 8 NVIDIA RTX A6000 GPUs.

\section{Additional experimental results}\label{app:add_exp}
\textbf{Quantitative results of LLaVA 665k and Vision-Flan.} In this section, we present the detailed Average Relative Performance (ARP) corresponding to the experiments in Section~\ref{subsec:main_results}. The full ARP results underlying Figures~\ref{fig:llava_bar_plot} are reported in Table~\ref{tab:detailed_results}.
As shown, our method consistently achieves the highest performance across various subset budgets on both LLaVA 665k and Vision-Flan datasets. Notably, \ourmethod~is the only approach that consistently outperforms random sampling on LLaVA 665k. On Vision-Flan, \ourmethod~outperforms COINCIDE by nearly 2\% when selecting subsets in the 5–15\% range.

\begin{table}[!t]
    \caption{{Average relative performances (ARP) over full data when training LLaVa-1.5-7B on 10\% subsets of LLaVA 665K found by \ourmethod~when using different (a) alignment score, (b) matrix aggegation, and (c) instability score.}}
    \centering
    \begin{subtable}[t]{0.32\textwidth}
        \centering
        \begin{tabular}{l c}
            \toprule
            Score & ARP \\
            \midrule
            Sum values & \textbf{95.4} \\
            Concat values & 93.0 \\
            Sing vector & 93.4 \\
            \bottomrule
        \end{tabular}
        \caption{Alignment score}\label{tab:alignment_score}
    \end{subtable}
    \hfill
    \begin{subtable}[t]{0.32\textwidth}
        \centering
        \begin{tabular}{l c}
            \toprule
            Method & ARP \\
            \midrule
            Sum & \textbf{95.4} \\
            Concat text & 92.9 \\
            Concat vision & 92.1 \\
            \bottomrule
        \end{tabular}
        \caption{Matrix aggregation}\label{tab:matrix_agg}
    \end{subtable}
    \hfill
    \begin{subtable}[t]{0.32\textwidth}
        \centering
        \begin{tabular}{l c}
            \toprule
            Score & ARP \\
            \midrule
            Sum abs diff & \textbf{95.4} \\
            Sum sq diff & 94.8 \\
            Variance  & 93.9 \\
            \bottomrule
        \end{tabular}
        \caption{Instability score} \label{tab:instability_score}
    \end{subtable}
    \hfill
    \label{tab:ablation_study2}
\end{table}

{\textbf{Choice of alignment score.} Table~\ref{tab:alignment_score} compares different choices of the alignment score (Definition~\ref{def:alignment_score}). For a single checkpoint, instead of summing the top-5 singular values, one option is to concatenate them into a 5-dimensional vector, resulting in an alignment trajectory of size 35 across 7 checkpoints. Another option is to use the singular vector (dimension 576) corresponding to the largest singular value, giving a trajectory of size 4032 in Equation~\ref{eq:attn_traj}. As shown, summing the top-5 singular values outperforms the alternatives by about 2\%.}

\begin{table}[!t]
    \caption{Average relative performances (ARP) over full data when training LLaVa-1.5-7B on 10\% subsets of LLaVA 665K found by \ourmethod~when using different (a) cluster sampling strategies, (b) attention matrices, and (c) found by COINCIDE with different layer choices.}
    \centering
    \begin{subtable}[t]{0.32\textwidth}
        \centering
        \begin{tabular}{l c}
            \toprule
            Attn layer & ARP \\
            \midrule
            First layer & 92.9 \\
            Last layer  & 93.7 \\
            All layers  & \textbf{95.4} \\
            \bottomrule
        \end{tabular}
        \caption{Attention layer}\label{tab:attn_layer}
    \end{subtable}
    \hfill
    \begin{subtable}[t]{0.32\textwidth}
        \centering
        \begin{tabular}{l c}
            \toprule
            Singular values & ARP \\
            \midrule
            All   & \textbf{95.6} \\
            Top-5    & 95.4 \\
            Top-1  & 94.2 \\
            \bottomrule
        \end{tabular}
        \caption{Singular values} \label{tab:singular_values}
    \end{subtable}
    \hfill
    \begin{subtable}[t]{0.32\textwidth}
        \centering
        \begin{tabular}{l c}
            \toprule
            Layer indices & ARP \\
            \midrule
            3,7,11,15,19 & \textbf{94.1} \\
            4,8,12,16,20 & 91.6 \\
            3,8,13,18,23 & 92.1 \\
            \bottomrule
        \end{tabular}
        \caption{Layer choice}\label{tab:coincide_layer_indices}
    \end{subtable}
    \hfill
    \label{tab:ablation_study3}
\end{table}

\begin{figure*}[t!]
    \vskip -0.1in
    \centering
    \includegraphics[width=0.6\columnwidth]{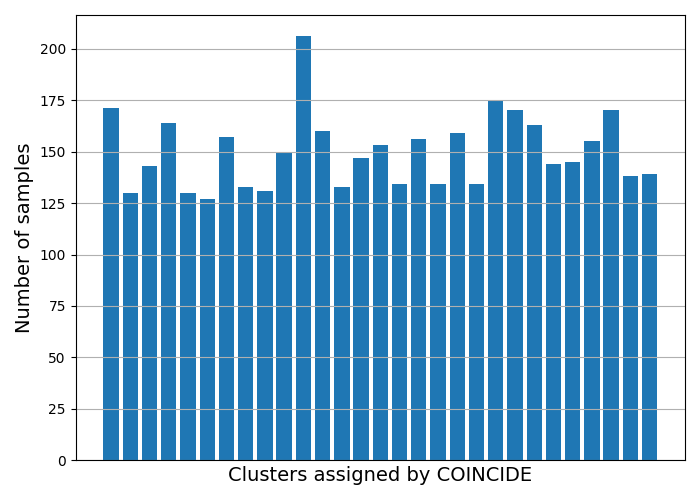}
    \caption{Distribution of concepts, i.e., clusters found by COINCIDE on LLaVA 665k, in the largest cluster found by \ourmethod. Only concepts with more than 100 samples are shown in the figure.}
    \label{fig:attn_coincide_map}
\end{figure*} 

\textbf{Choice of layer aggregation strategies.} In Definition~\ref{def:alignment_score}, we aggregate the cross-modal attention matrices by summing them across all layers, resulting in a single matrix of size $n_T \times n_I$. In this experiment, we benchmark this approach with alternative strategies that stack the attention matrices along either the text or vision dimension, producing matrices of size $L n_T \times n_I$ or $n_T \times L n_I$, respectively, where $L$ is the number of layers. As shown in Table~\ref{tab:matrix_agg}, summing the per-layer attention matrices outperforms both stacking approaches. We hypothesize that summation reduces noise and highlights shared structures across layers, leading to more robust alignment signals. Moreover, this strategy is computationally more efficient, as computing the SVD on the summed matrix is significantly faster than on the higher-dimensional stacked variants. Therefore, we adopt layer-wise summation as our default aggregation method.

{\textbf{Choice of instability score.} Table~\ref{tab:instability_score} compares alternative definitions of the instability score for alignment trajectories. Our default choice, the sum of absolute differences $S_i=\sum_{j=1}^T|\sigma_i(\phi^{t_j})-\sigma_i(\phi^{t_{j-1}})|$, achieves the best performance (95.4 ARP) as it directly measures cumulative oscillation across checkpoints. Using squared differences $S_i^{\text{sqr}}=\sum_{j=1}^T(\sigma_i(\phi^{t_j})-\sigma_i(\phi^{t_{j-1}}))^2$ slightly reduces performance, since large fluctuations are overweighted. Variance $S_i^{\text{var}}=\tfrac{1}{T}\sum_{j=1}^T(\sigma_i(\phi^{t_j})-\tfrac{1}{T}\sum_{k=1}^T\sigma_i(\phi^{t_k}))^2$ performs worse, as it only captures the spread of values and ignores the temporal ordering of oscillations.}

\textbf{Choice of attention layers.}
{Table~\ref{tab:attn_layer} illustrates that aggregating cross-modal matrices across all the layers outperforms only using the first or last layer.}

\textbf{Number of singular values.} While using all the singular values of the cross-modal attention can capture its full spectrum, it is more computationally more expensive. Indeed, using only top-5 singular values reduces the SVD computation from 51.7 (ms) to 1.77 (ms) on average. %
Furthermore, using top-5 singular values only harms the average relative performance by 0.2\% as detailed in Table~\ref{tab:singular_values} while using top-1 singular values decreases the performance more significantly by 1.4\%.

{\textbf{COINCIDE is sensitive to the layer choice.} We would like to emphasize that COINCIDE's total time includes the cost of training a proxy model and finding layers that encode different concepts, but the cost for ``finding layers" are omitted from the time we reported, as we directly used the layers specified in their paper. For finding the layers, COINCIDE needs to try different sets of layers (as much as ${24 \choose 5}$ and retrain the target model on the corresponding subsets. This makes COINCIDE considerably more expensive than our method (although we are not able to calculate its exact cost). We empirically showed that COINCIDE is sensitive to the layer choice. Table~\ref{tab:coincide_layer_indices} compares the ARP of COINCIDE when varying the layer used to extract features. Clearly, the performance decreases significantly when using a different set of layers. Furthermore, COINCIDE requires substantially more memory: for each data point it stores a feature vector of size 20480, while \ourmethod~only stores a vector of size 35.}

\textbf{Cluster diversity.} As illustrated in Figure~\ref{fig:attn_coincide_map}, the clusters identified by \ourmethod\ contain samples spanning multiple concepts detected by COINCIDE. This indicates that while samples within an \ourmethod\ cluster are conceptually diverse, they exhibit redundancy relative to each other in terms of training---an aspect that concept-based clustering methods like COINCIDE fails to capture.
Figures~\ref{fig:stacked_images1}-\ref{fig:stacked_images4} show the semantic diversity within a single XMAS cluster for different datasets. The values in the cross-model attention trajectory graphs also show that these clusters vary in the levels of cross-modal alignment.

\newpage
\begin{figure}[htp]
    \centering
        \includegraphics[width=0.5\textwidth]{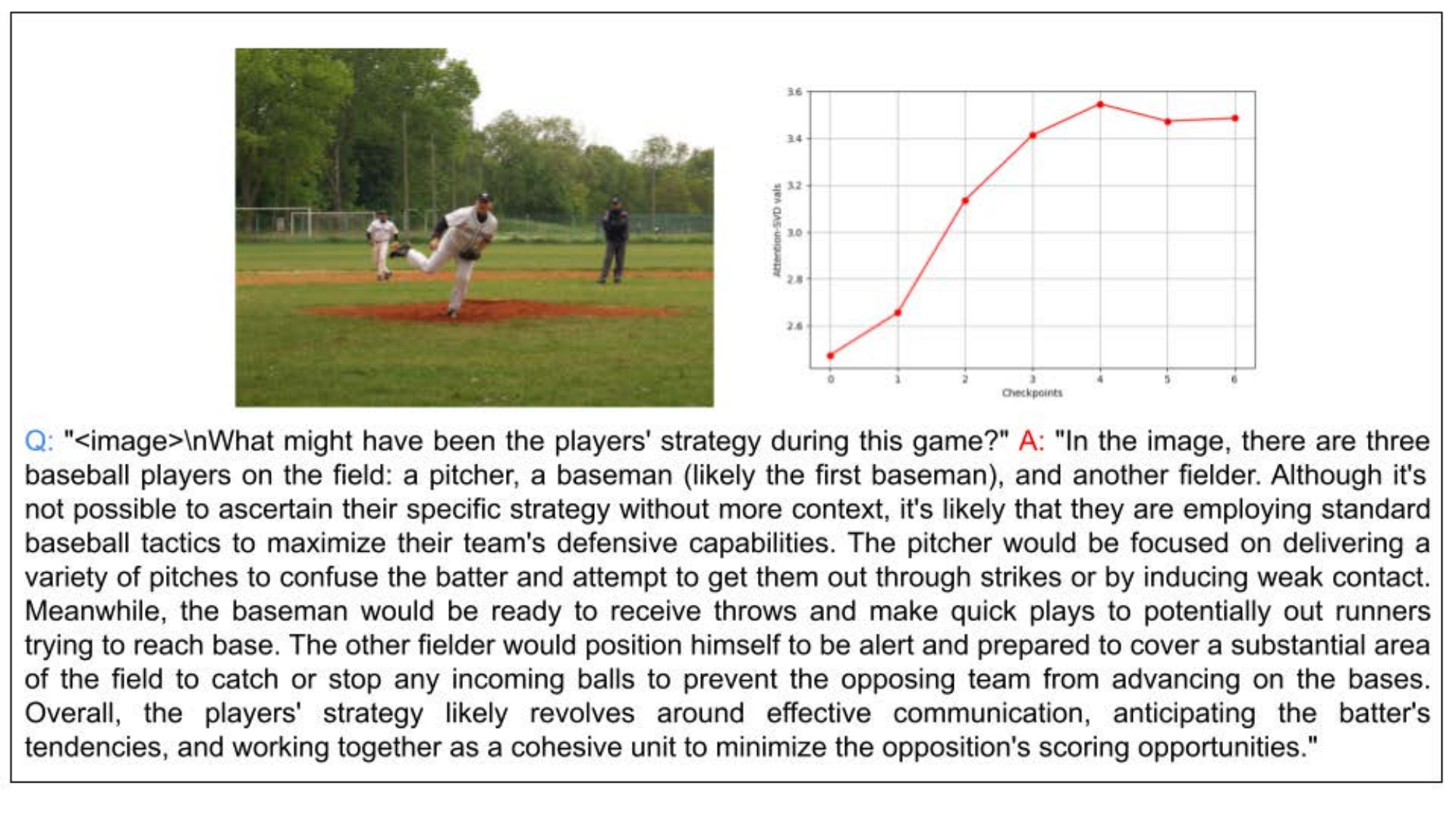}\par\vspace{0.5em}
        
        \includegraphics[width=0.5\textwidth]{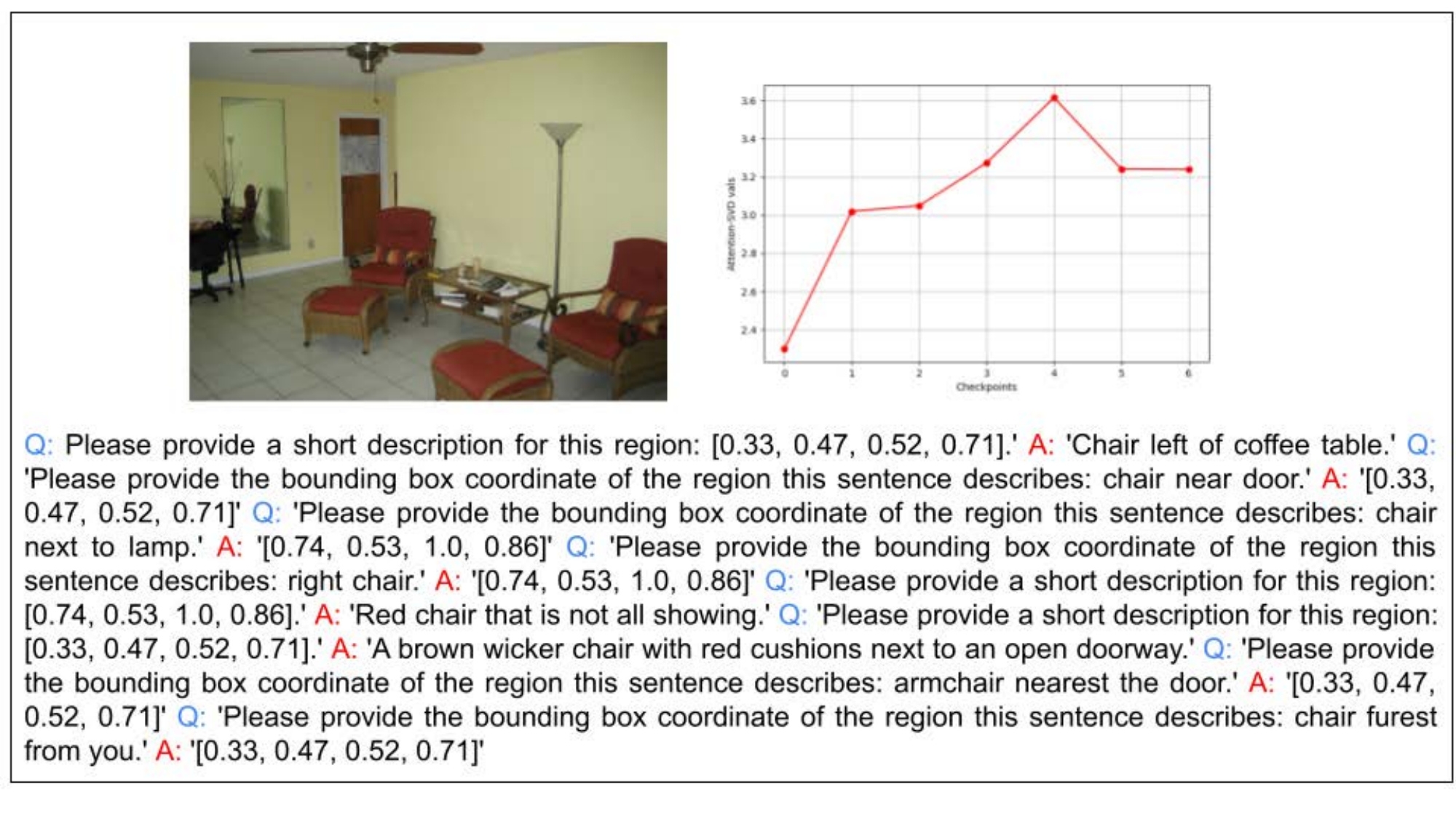}\par\vspace{0.5em}
       
        \includegraphics[width=0.5\linewidth]{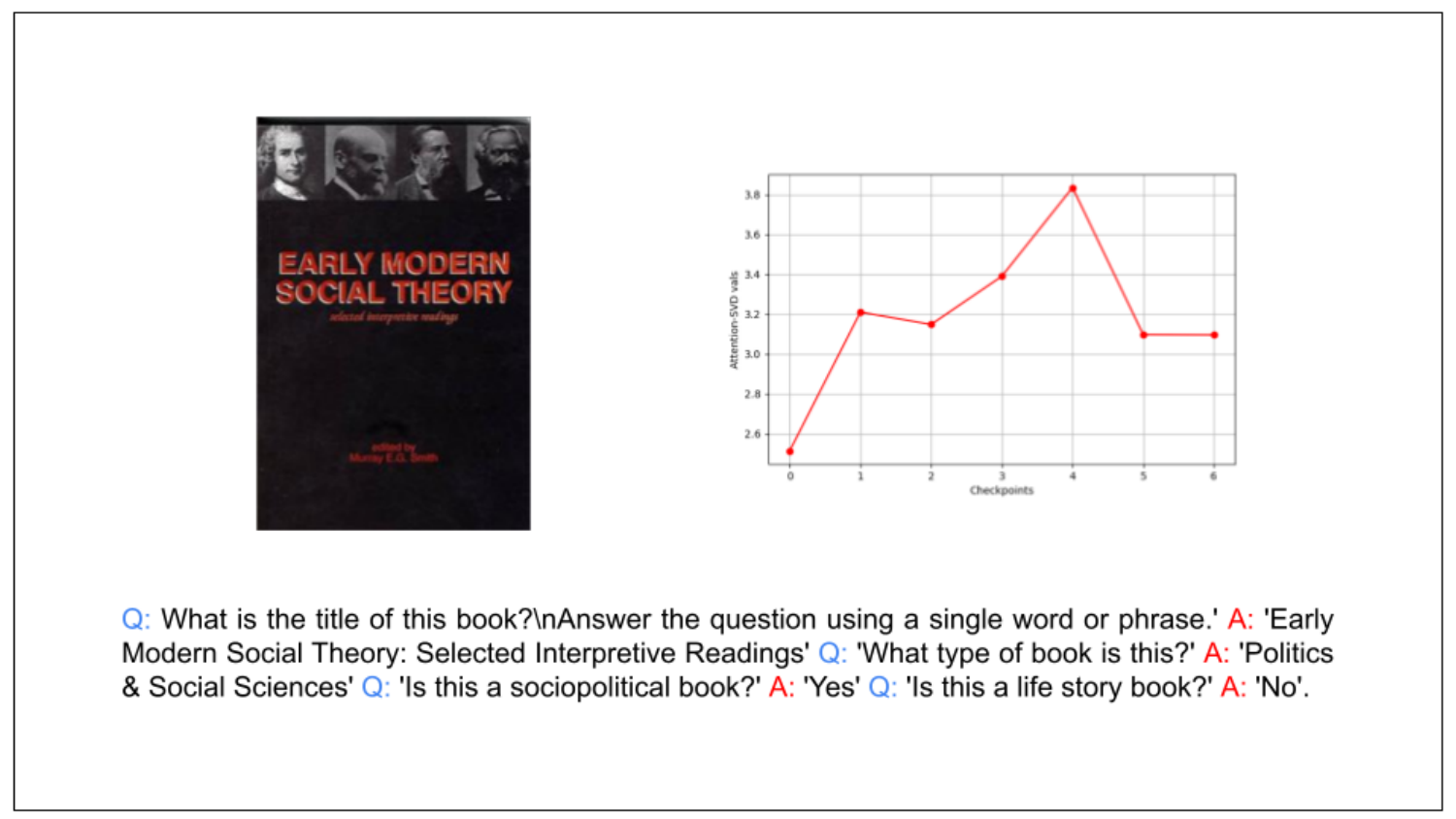}\par\vspace{0.5em}
        
        \includegraphics[width=0.5\textwidth]{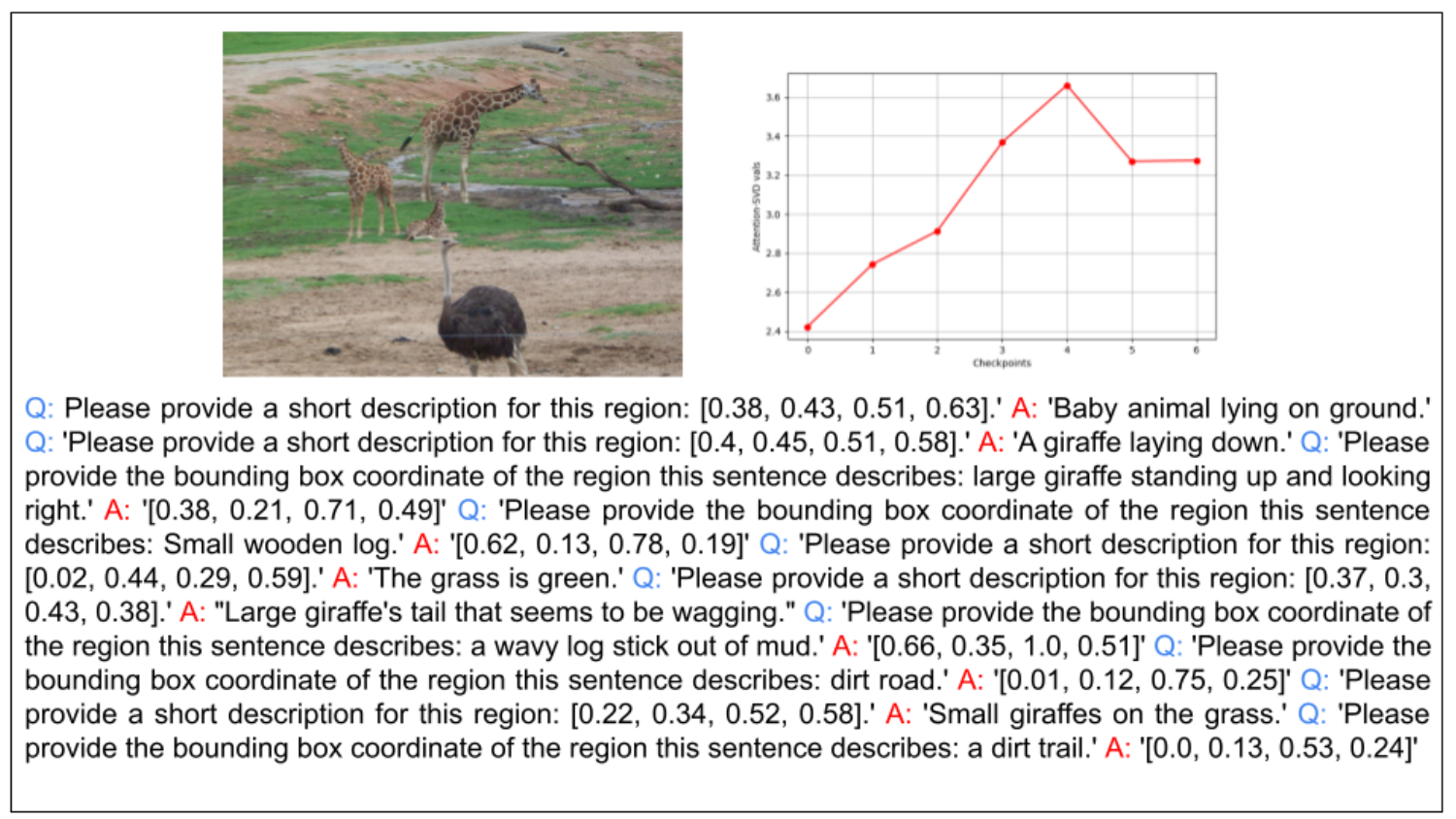}
       
        \caption{Samples from the largest XMAS clusters in LLaVA 665k dataset.}
    \label{fig:stacked_images1}
\end{figure}

\newpage

\begin{figure}[htp]
    \centering
        \includegraphics[width=0.5\textwidth]{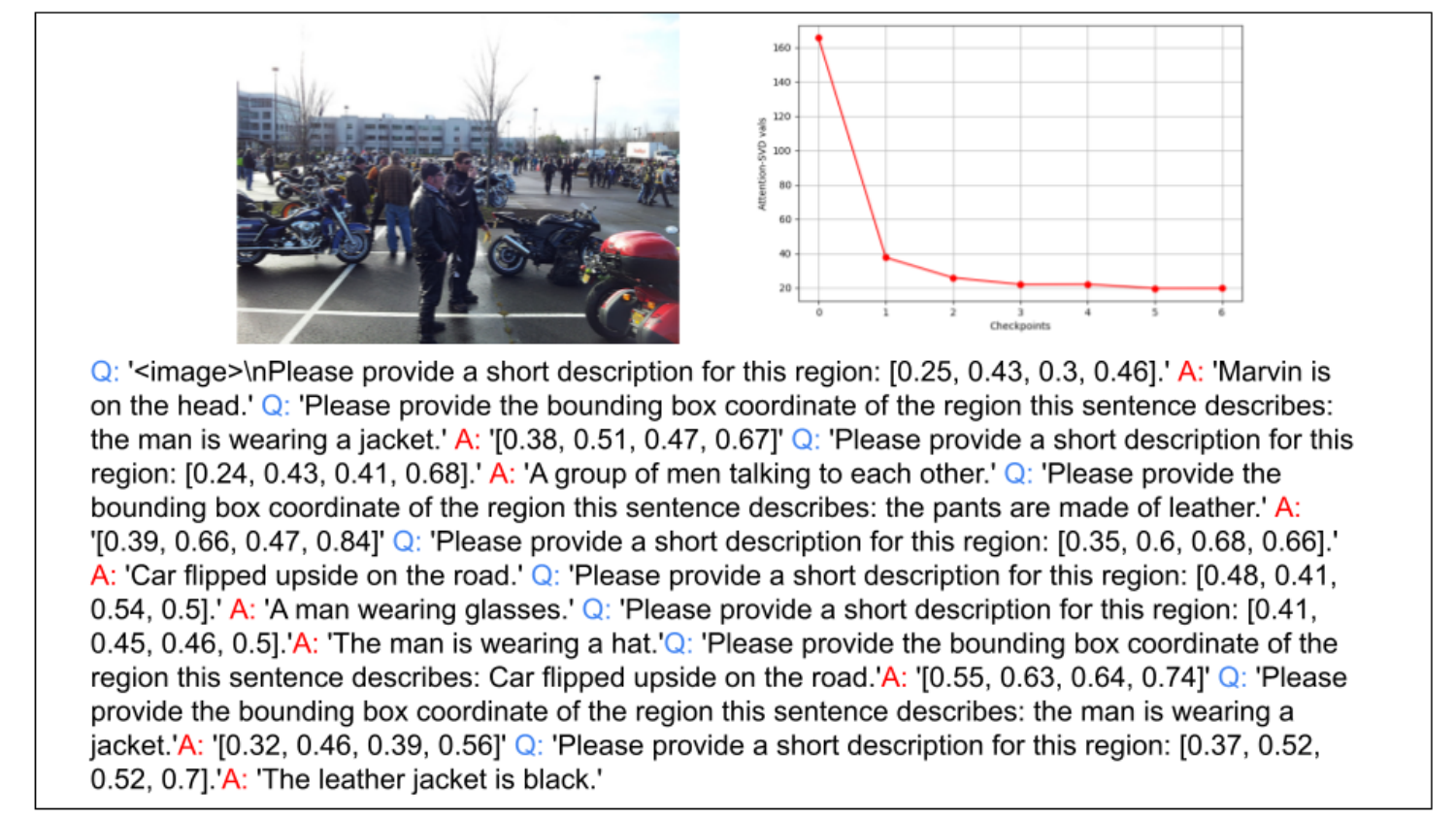}\par\vspace{0.5em}
        
        \includegraphics[width=0.5\textwidth]{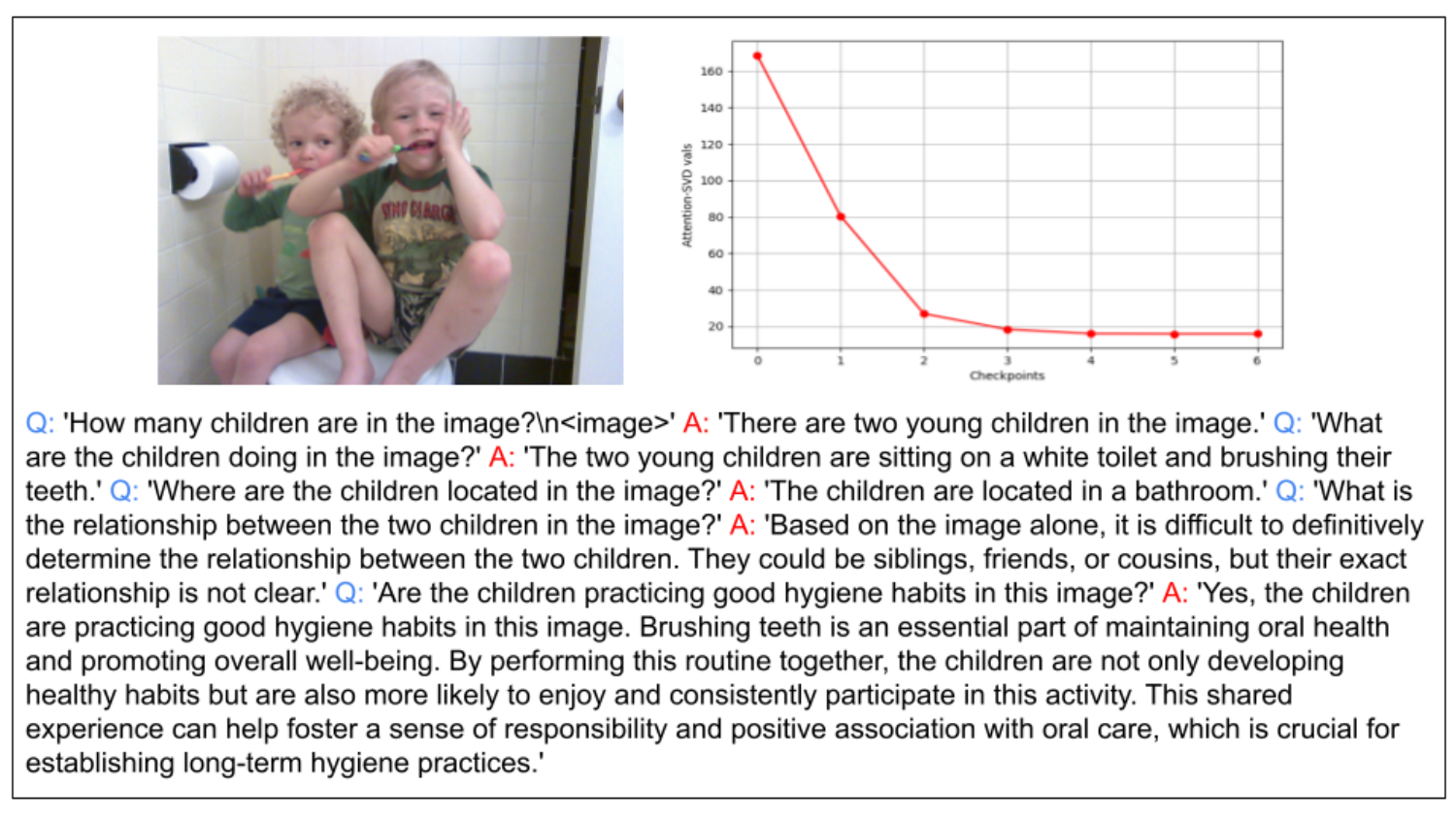}\par\vspace{0.5em}
      
        \includegraphics[width=0.5\textwidth]{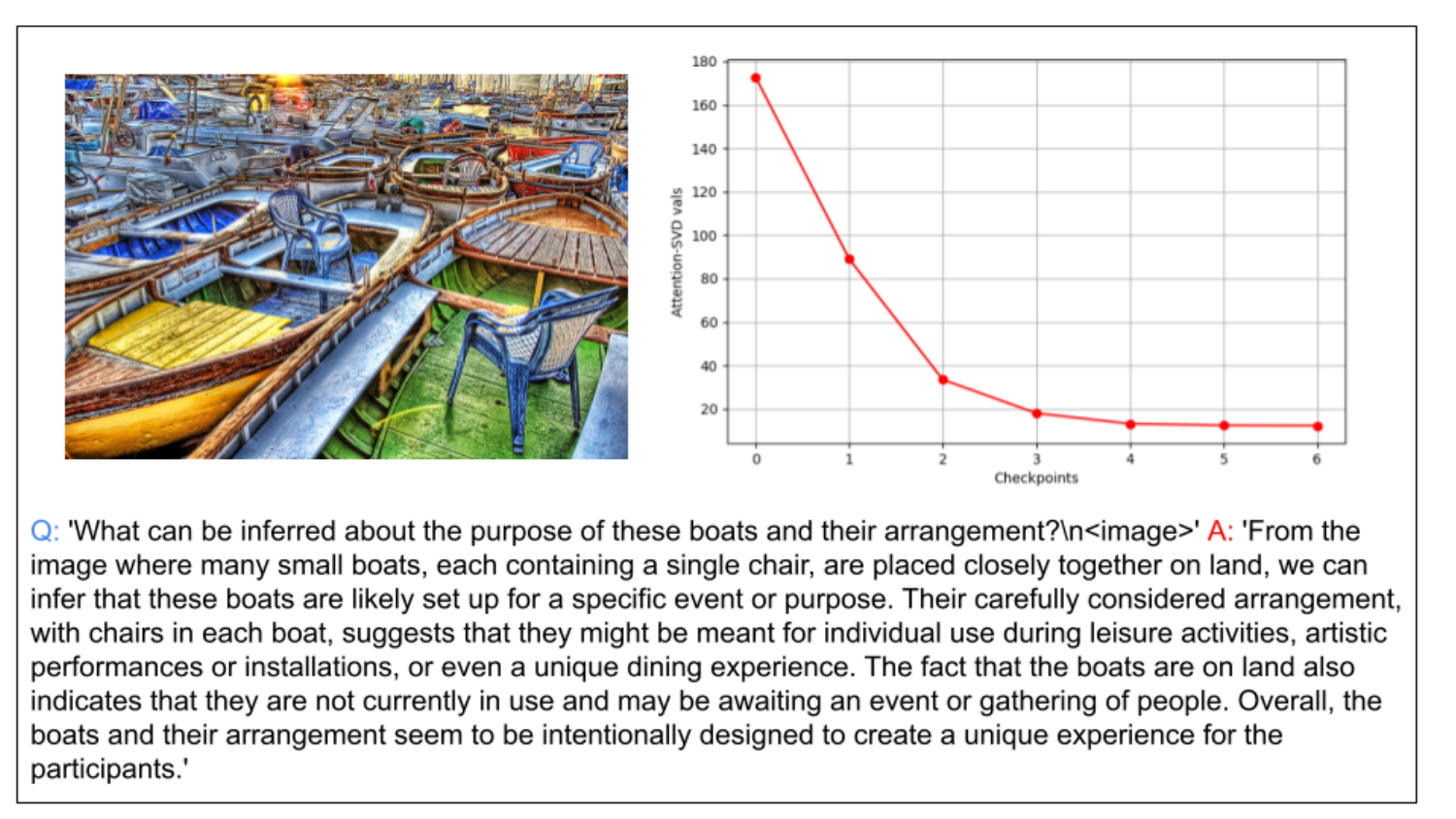}\par\vspace{0.5em}
              
        \includegraphics[width=0.5\textwidth]{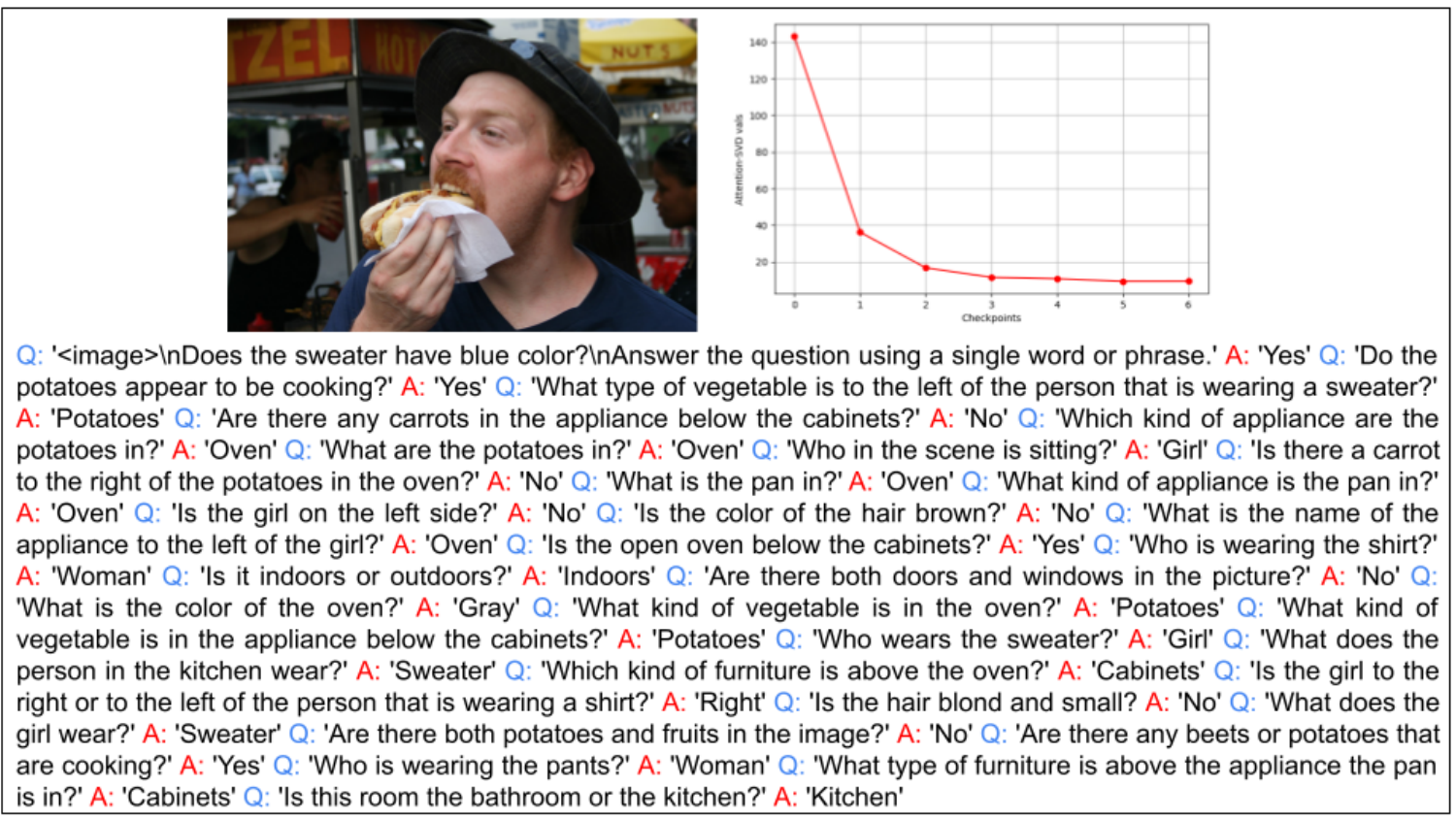}
    
        \caption{Samples from the smallest XMAS clusters in LLaVA 665k dataset.}
    \label{fig:stacked_images2}
\end{figure}

\newpage 
\begin{figure}[htp]
    \centering
        \includegraphics[width=0.5\textwidth]{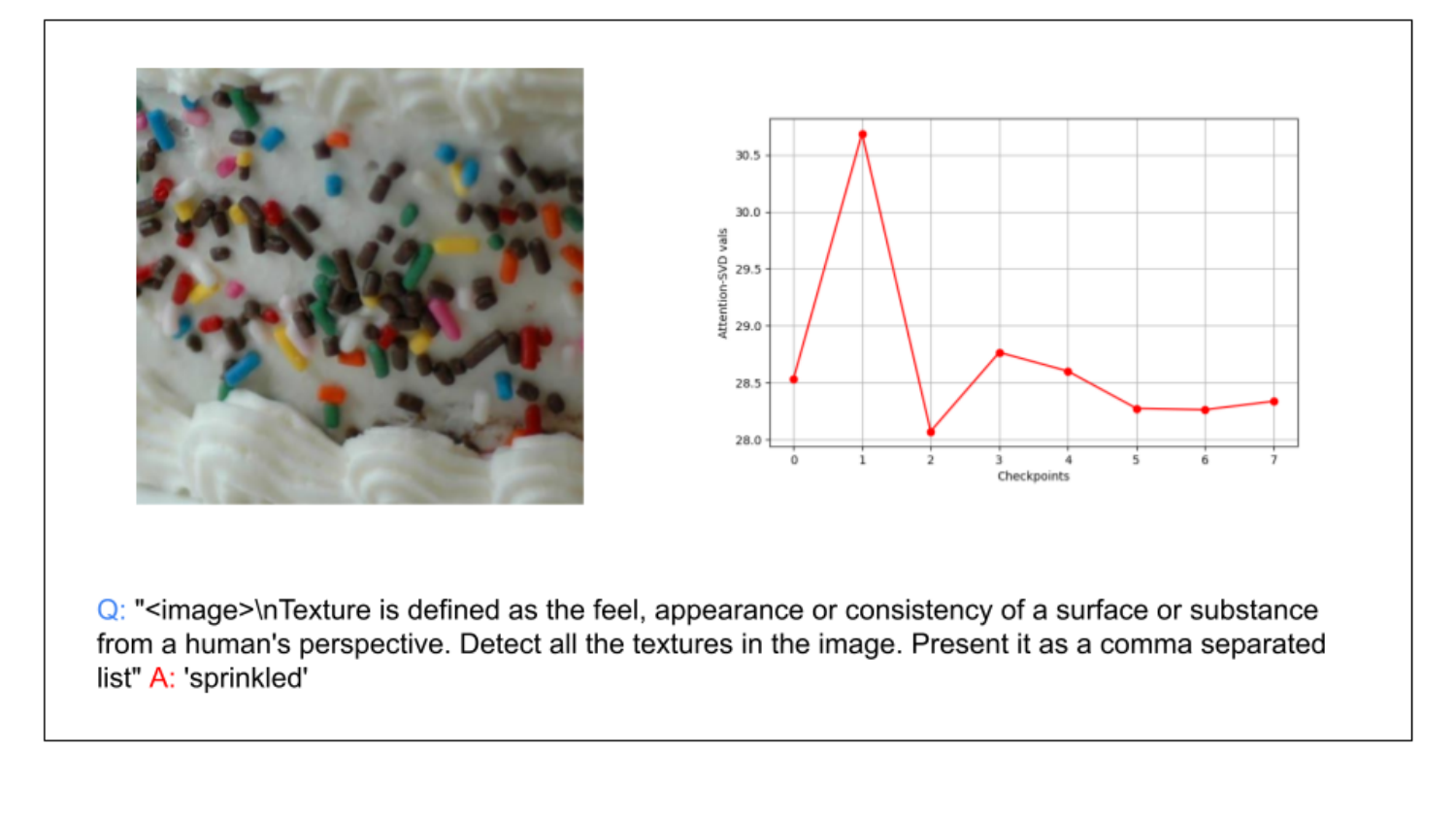}\par\vspace{0.5em}
        
        \includegraphics[width=0.5\textwidth]{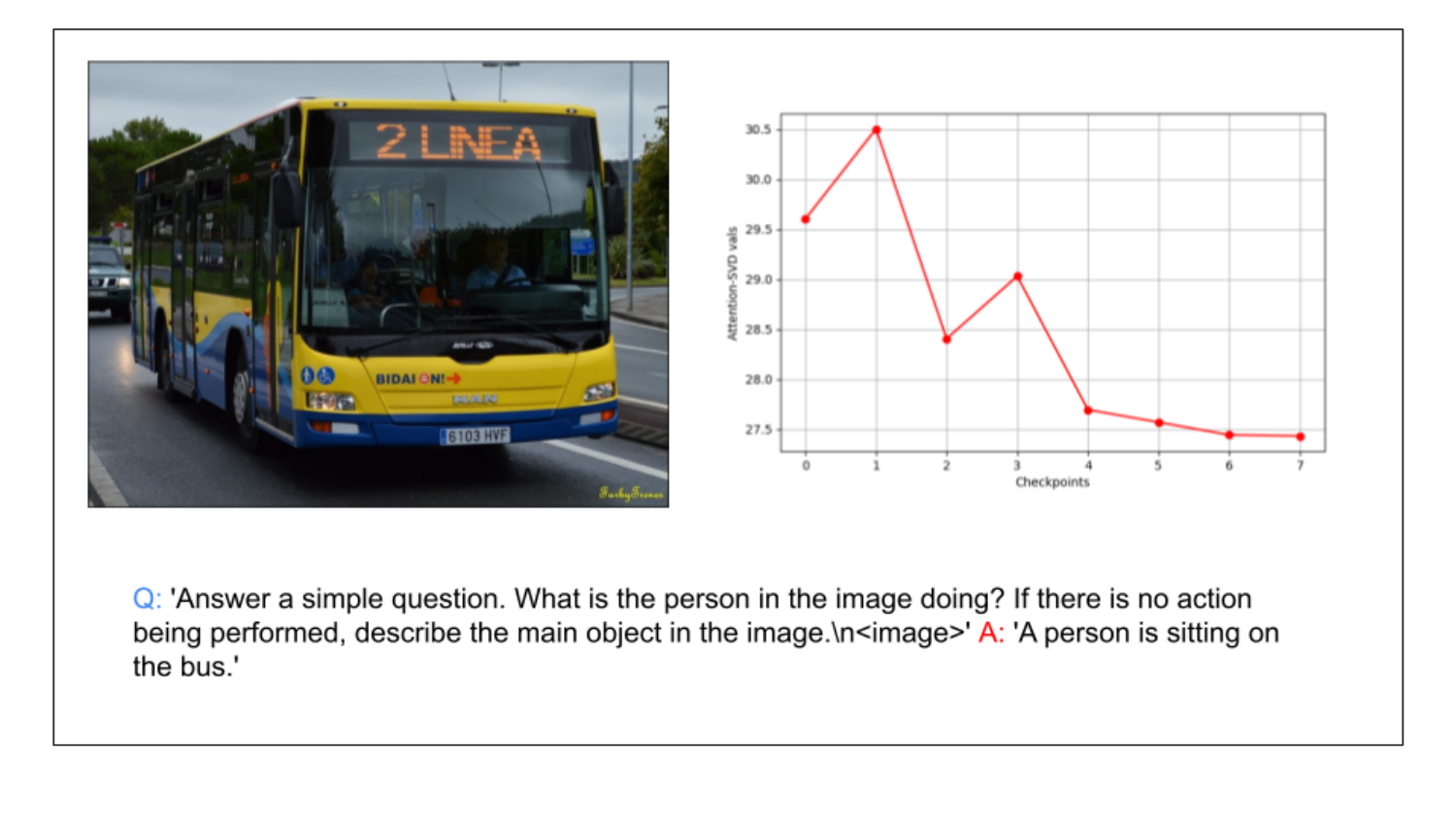}\par\vspace{0.5em}
        
        \includegraphics[width=0.5\linewidth]{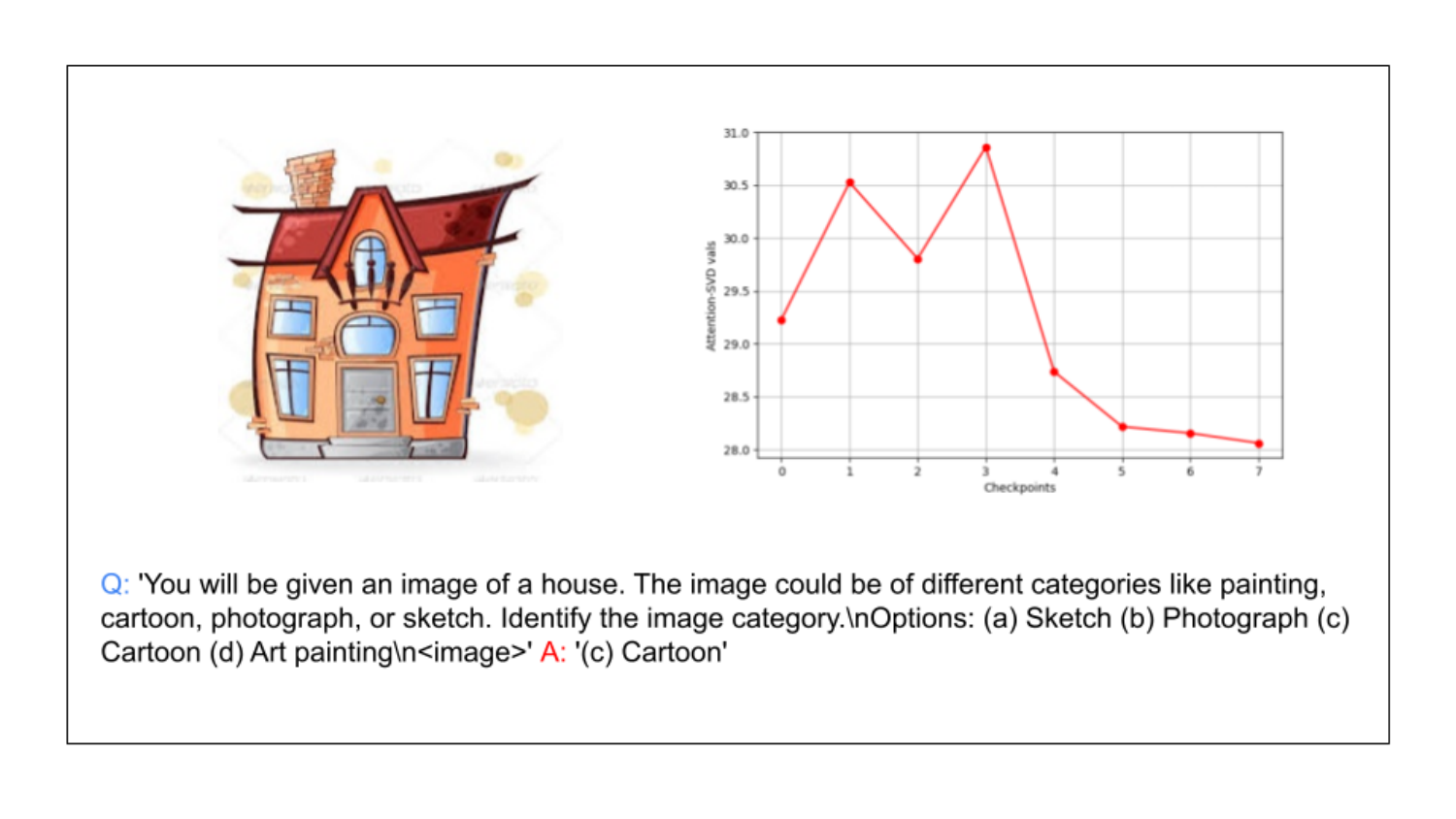}\par\vspace{0.5em}
        
        \includegraphics[width=0.5\textwidth]{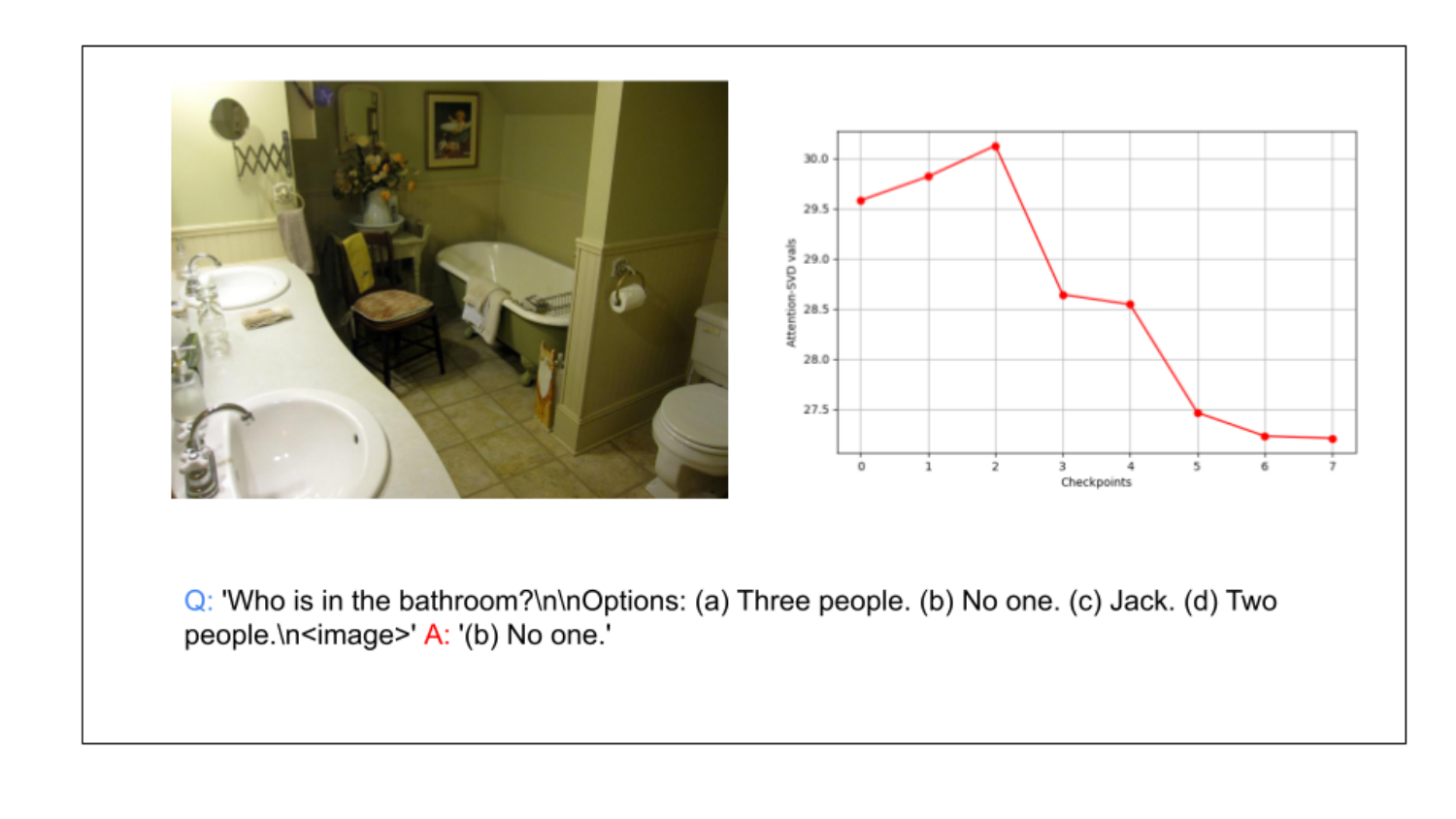}

        \caption{Samples from the largest XMAS clusters in Vision-Flan 191k dataset.}
    \label{fig:stacked_images3}
\end{figure}

\newpage
\begin{figure}[htp]
    \centering
        \includegraphics[width=0.5\textwidth]{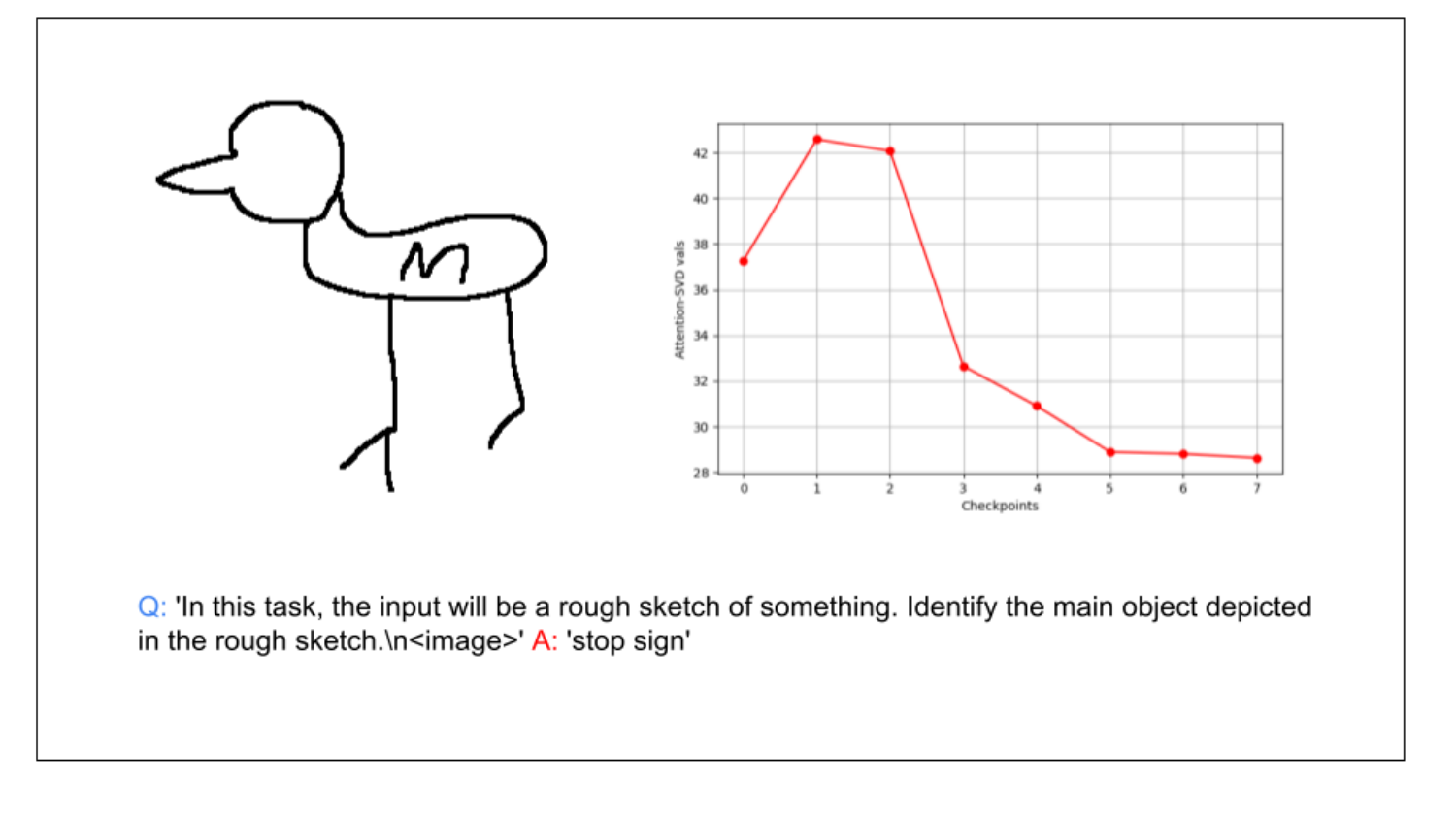}\par\vspace{0.5em}
        
        \includegraphics[width=0.5\textwidth]{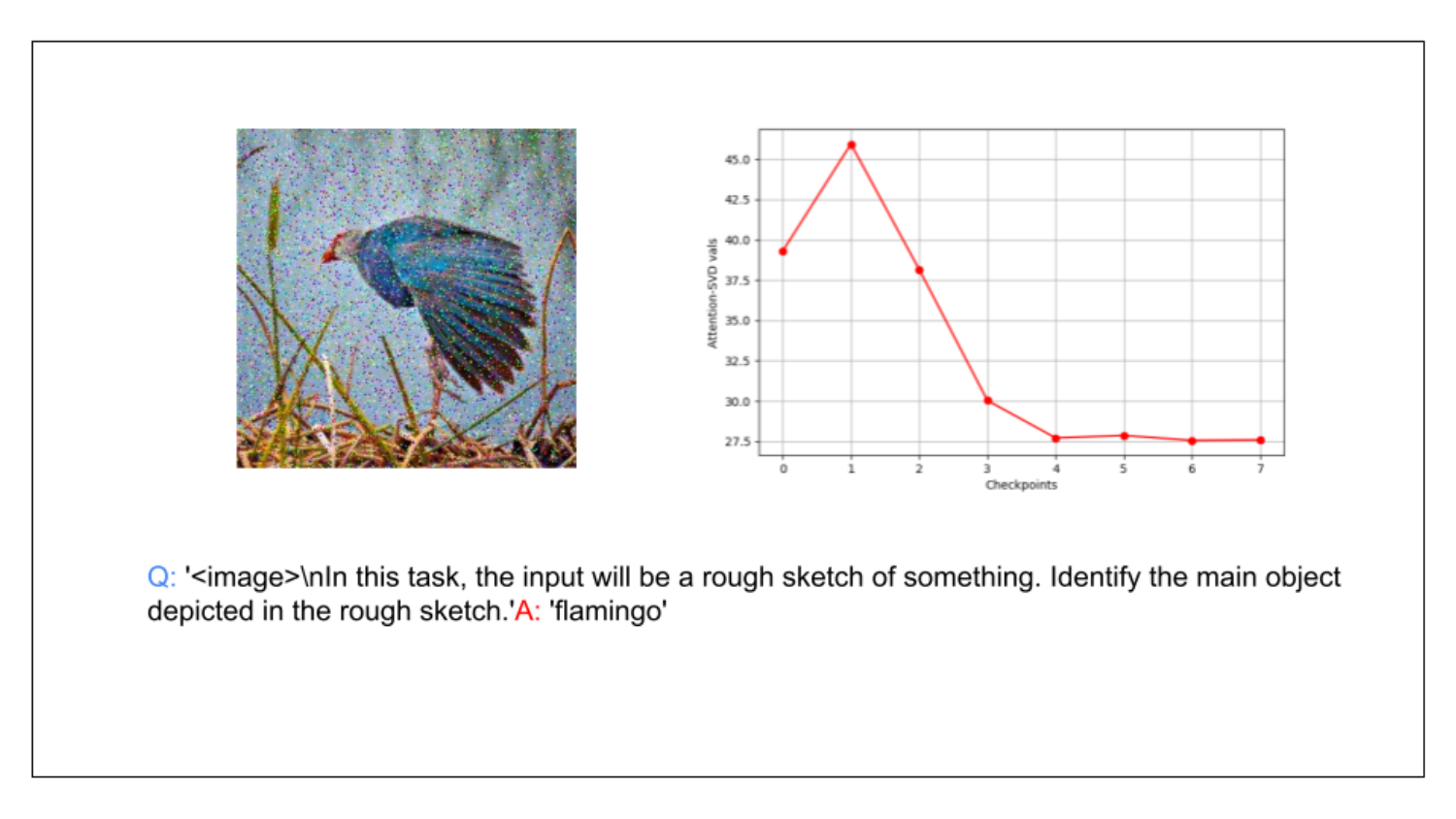}\par\vspace{0.5em}
        
        \includegraphics[width=0.5\linewidth]{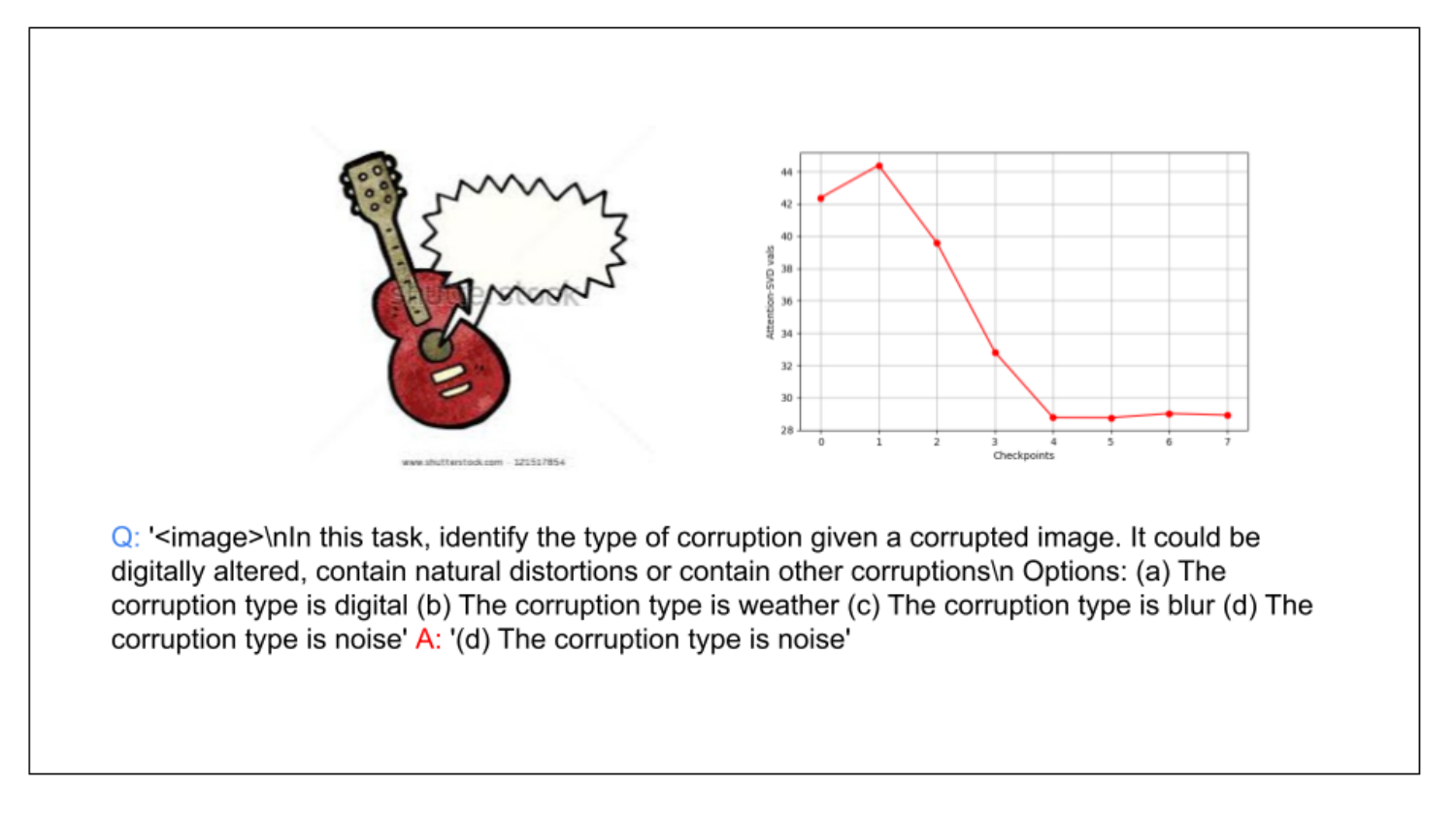}\par\vspace{0.5em}
        
        \includegraphics[width=0.5\textwidth]{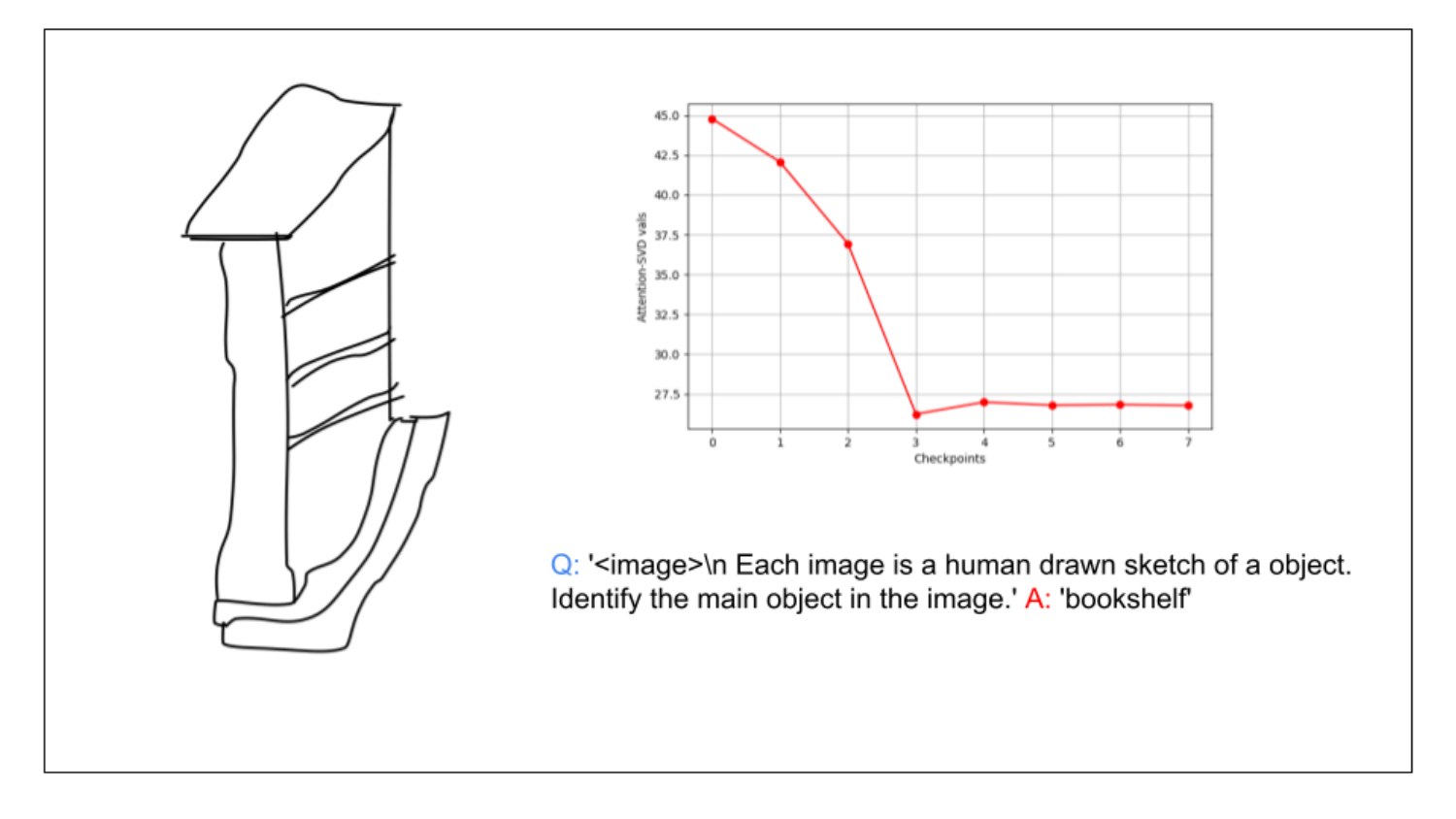}
        
        \caption{Samples from the smallest XMAS clusters in Vision-Flan 191k dataset.}
    \label{fig:stacked_images4}
\end{figure}

\end{document}